\def\eqref#1{Equation~(\ref{#1})}
\def\1{\bm{1}}
\def\##1\#{\begin{align}#1\end{align}}
\def\$#1\${\begin{align*}#1\end{align*}}
\newcommand{\cD}{\mathcal{D}}
\newcommand{\cX}{\mathcal{X}}
\newcommand{\cY}{\mathcal{Y}}
\newcommand{\E}{\mathbb{E}}
\newcommand{\RR}{\mathbb{R}}
\definecolor{red1}{HTML}{f47983}
\definecolor{blue1}{HTML}{3eede7}
\definecolor{yellow1}{HTML}{f5dd6f}
\newcommand{\argmax}{\mathop{\mathrm{argmax}}}
\newcommand{\norm}[1]{\|#1\|}
\newtheorem{theorem}{Theorem}
\newtheorem{remark}{Remark}
\newtheorem{proposition}{Proposition}
\newtheorem{lemma}{Lemma}
\newcommand{\dibs}{GVM\xspace}
\title{Optimizing Chain-of-Thought Reasoners via Gradient Variance Minimization in Rejection Sampling and RL}
\author[1$*$]{Jiarui Yao}
\author[1]{Yifan Hao}
\author[1]{\\Hanning Zhang}
\author[2]{Hanze Dong}
\author[1]{Wei Xiong}
\author[1]{Nan Jiang}
\author[1]{Tong Zhang}
\affil[1]{University of Illinois Urbana-Champaign}
\affil[2]{Salesforce AI Research}
\begin{document}

\maketitle
\def\thefootnote{$*$}\footnotetext{The first two authors contributed equally with random author order, detailed contributions deferred to Appendix \ref{app:contributions}. Emails: \texttt{\{jiarui14, yifanh12, hanning5, wx13, nanjiang, tozhang\}@illinois.edu, hanze.dong@salesforce.com}.}

\begin{abstract}
\label{abs}
Chain-of-thought (CoT) reasoning in large language models (LLMs) can be formalized as a latent variable problem, where the model needs to generate intermediate reasoning steps. While prior approaches such as iterative reward-ranked fine-tuning (RAFT) have relied on such formulations, they typically apply uniform inference budgets across prompts, which fails to account for variability in difficulty and convergence behavior. This work identifies the main bottleneck in CoT training as inefficient stochastic gradient estimation due to static sampling strategies. We propose \dibs-RAFT, a prompt-specific Dynamic Sample Allocation Strategy designed to minimize stochastic gradient variance under a computational budget constraint. The method dynamically allocates computational resources by monitoring prompt acceptance rates and stochastic gradient norms, ensuring that the resulting gradient variance is minimized. Our theoretical analysis shows that the proposed dynamic sampling strategy leads to accelerated convergence guarantees under suitable conditions. Experiments on mathematical reasoning show that \dibs-RAFT achieves a 2-4$\times$ speedup and considerable accuracy improvements over vanilla RAFT. The proposed dynamic sampling strategy is general and can be incorporated into other reinforcement learning algorithms, such as GRPO, leading to similar improvements in convergence and test accuracy. Our code is available at \url{https://github.com/RLHFlow/GVM}. 
\end{abstract}

\section{Introduction}
We consider mathematical reasoning with large language models (LLMs): given a prompt $x \in \cX$, and aims to produce a correct final answer $z \in \mathcal{Z}$. A prevalent approach in this area is the \textit{chain-of-thought (CoT) reasoning} \citep{wei2022chain}, in which the model generates a step-by-step rationale $y \in \mathcal{Y}$ before outputting the final answer $z$. In practice, we are typically given a pre-trained and instruction fine-tuned LLM parameterized by $\theta_0$ and training samples $ \{ (x_i, z_i) \}$ of prompt-answer pairs. Additionally, we assume access to a verifier $r^\star(x,z) \to \{0,1\}$ that indicates whether a predicted answer is correct or not. This is standard in practice and is particularly popular by the recent DeepSeek-R1 project \citep{deepseekai2025deepseekr1incentivizingreasoningcapability}, which suggests using only a symbolic verifier rather than training a neural reward model. The goal is to improve model performance by training it to generate high-quality CoT rationales that improve the final answer accuracy on unseen prompts.

We formalize CoT reasoning as a latent variable problem, treating the rationale $y$ as hidden. From this perspective, we propose a new algorithmic framework based on the expectation-maximization (EM) algorithm, which we formalize in Section~\ref{sec:em}. Prior works such as \citet{sordoni2023joint, singh2023beyond} have shown that this EM framework can be implemented as a variant of iterative reward-ranked fine-tuning (RAFT) \citep{dong2023raft, touvron2023llama}, also known as rejection sampling fine-tuning in the literature. Specifically, RAFT iteratively alternates between the expectation and maximization steps:
\begin{enumerate}
    \item \texttt{E-step}: Prompt LLMs to generate $n$ responses per prompt, and keep responses with the highest reward only (with the correct final answers). This process can be thought to be approximating the posterior distribution of the latent variable and the evidence lower bound (ELBO). See Section~\ref{sec:em} for a formal presentation.
    \item \texttt{M-step}: Fine-tune the LLMs on the selected responses from the E-step. The fine-tuned model is used for the next-iteration E-step.
\end{enumerate}

\begin{figure}[h]
    \centering
    \includegraphics[width=0.9\textwidth]{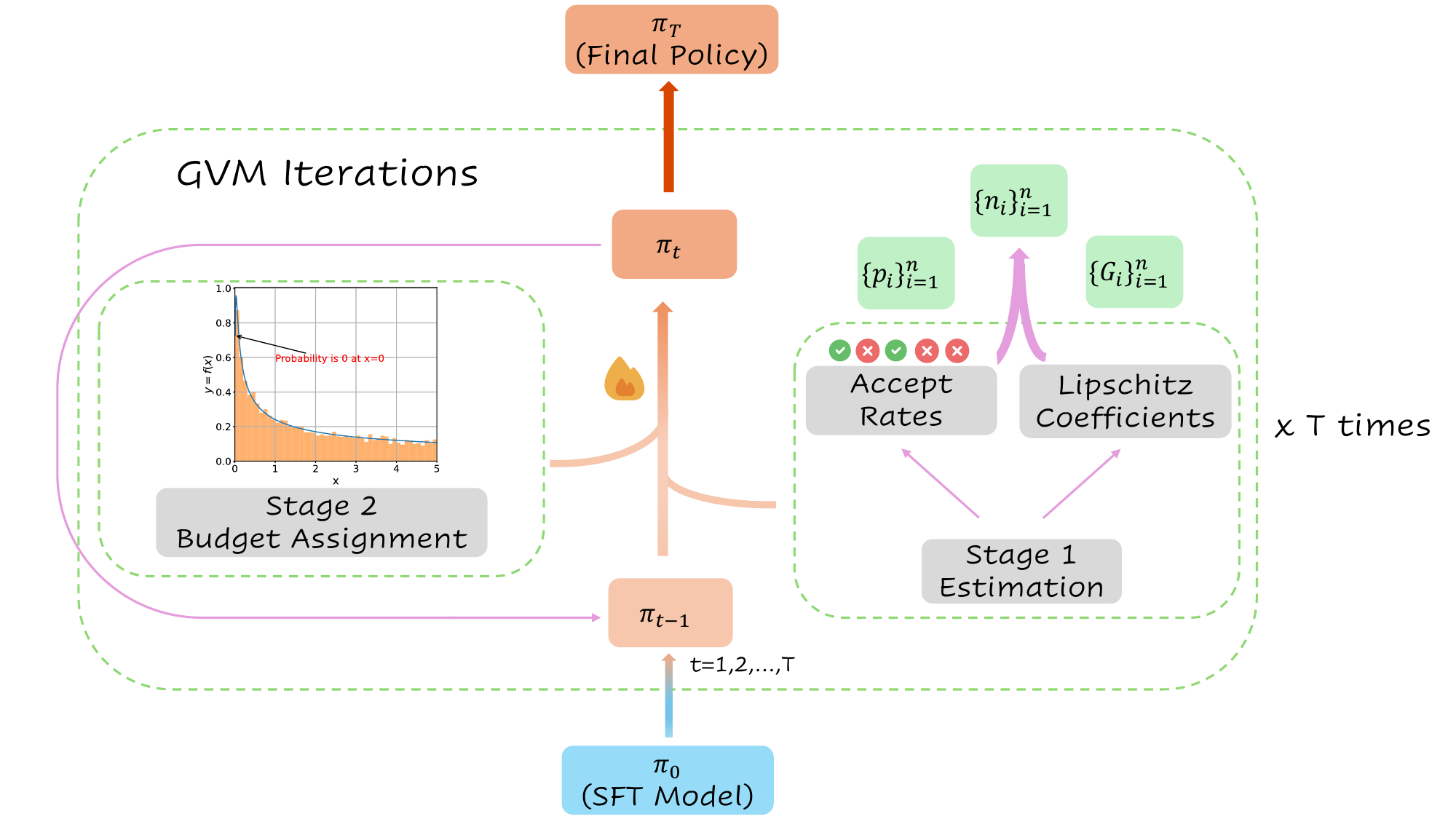}
    \caption{The demonstration of the whole pipeline for \dibs. Starting from $\pi_0$, which a supervised fine-tuning (SFT) model, \dibs\ will iteratively conduct the estimation and budget assignment process for $T$ times according to the description in Algorithm \ref{alg:meta}. Each iteration could be decomposed into two stages, and the final policy model will be returned after those iterations.}
    \label{fig:main fig}
\end{figure}

Although RAFT and its variants have been widely applied to tasks in the post-training stage of LLMs, they are primarily motivated by the idea of imitating the best-of-n sampling rather than variance reduction. As a result, theoretical understanding is often lacking. For instance, these RAFT-style approaches typically adopt a uniform sampling strategy that treats all prompts equally, ignoring differences in sample efficiency or convergence behavior

In this work, we revisit CoT reasoning under the EM framework and provide a deeper theoretical analysis.  We identify the key bottleneck as the Monte Carlo estimation of the ELBO gradient during the E-step, which arises due to the intractability of going over all possible CoT rationales. Previous works mainly use the best-of-n sampling to allocate the inference budget uniformly \citep{sordoni2023joint, singh2023beyond}, which overlook the differences in the convergence rate under different prompts. To address this, we propose a dynamic sample budget allocation strategy that adaptively assigns computational resources across prompts based on theoretical insights. This leads to a more efficient Monte Carlo estimation of the ELBO gradient. Our resulting algorithm, a refined RAFT variant with {dynamic inference budget scheduling} through \textit{Gradient Variance Minimization} (\dibs-RAFT), achieves substantial performance improvements and even sometimes outperforms state-of-the-art deep RL methods such as GRPO \citep{shao2024deepseekmath} in our Qwen-based case studies. In particular, the sample budget allocation schedule itself can be of independent interests and we also extend our techniques to the RL algorithm, where it also brings notable improvements. We summarize our contributions as follows:
\begin{enumerate}
\item We revisit the EM framework and RAFT in the context of CoT reasoning, and identify that a major limitation of current approaches lies in inefficient stochastic gradient estimation caused by uniform and static sampling strategies (i.e., best-of-n sampling), which fail to account for prompt-specific difficulty and convergence behavior.
\item Motivated by the goal of minimizing the variance of stochastic gradient, we propose a dynamic sampling strategy that adaptively allocates computational resources based on prompt hardness and gradient norms. Our approach provides both intuitive theoretical insight and rigorous convergence guarantees, establishing a principled framework for efficient on-policy sampling under computational budget constraints.
\item  We apply our method to both RAFT++ and GRPO algorithms with real-world experiments on mathematical reasoning tasks. Our results demonstrate that the proposed approach achieves 2-4$\times$ speedup in convergence rate and also considerably improve the final test accuracy. 
\end{enumerate}

\subsection{Related Work}
\paragraph{Iterative rejection sampling fine-tuning.}  A line of works studies the RAFT-type algorithms (iterative rejection sampling fine-tuning) \citep{dong2023raft, touvron2023llama, gulcehre2023reinforced, yuan2023scaling}. These algorithms iteratively generate $n$ responses per prompt, discard those with incorrect answers, and fine-tune the LLM on the remaining accepted samples. Another closely related method is STaR \citep{zelikman2022star}, which also involves iterative training on self-generated CoT responses. In comparison, STaR uses greedy decoding rather than sampling, which deviates from the EM principle and suffers from severe bias across prompts of different difficulty levels (see Section~\ref{sec:em} for details). STaR also proposes to provide the answer in the prompt to generate CoT responses for harder problems. However, our preliminary experiments suggest that LLMs often generate wrong reasoning path and output the correct answer regardless of the context. Additionally, STaR retrains from the original pre-trained model in each iteration, rather than bootstrapping from the updated model. These algorithms have been widely applied in the post-training stage of LLMs since then \citep{touvron2023llama, xiong2025self, llama3modelcard, yang2024qwen2}. We also mention in passing that this algorithm framework can further date back to the expert iteration in the RL literature \citet{anthony2017thinking}. Lastly, \citet{tong2024dart} study inference budget allocation in the context of synthetic data construction. Their approach uses a fixed model to repeatedly sample responses until $m$ correct CoT rationales are obtained per prompt ($m$ is a hyper-parameter). While this setup also studies budget-aware sampling, they only study the offline setting and their method lacks a dynamic scheduling mechanism so cannot be applied to more general iterative online algorithms. 

\paragraph{Chain-of-thought reasoning under EM framework.} Our work is also closely related with the line of work studying EM framework in CoT reasoning. \citet{singh2023beyond} connects RAFT with the EM algorithm under the CoT reasoning framework. We will cover such a connection in Section~\ref{sec:em} for completeness and also discuss the limitation of the vanilla RAFT algorithm, and motivate our algorithm designs. Another related work is \citet{sordoni2023joint}, which proposes a gradient-free EM-based approach for prompt selection, emphasizing a different direction. Our work is also closely related to \citet{hoffman2023training}, which proposes TRICE that optimizes an ELBO-inspired objective using MCMC to approximate the posterior over the latent CoT rationales. TRICE maintains a table of prompt–CoT pairs and, in each iteration, samples a new CoT $y$ per prompt, and replace the old CoT record $y_{old}$ with $y$ if $y$ leads to correct final answer. They also propose to add a mean-zero baseline to reduce the variance of the gradient estimator \citep{owen2000safe}. In comparison, we design a \textit{dynamic inference budget allocation scheduling} for a more effective estimation of the ELBO gradient by rejection sampling so the algorithm framework and techniques diverge significantly. Moreover, we implement our algorithm in a highly online and on-policy way (see Section~\ref{sec:practical} for details) to achieve the best performance. The lazy update of TRICE in maintaining the table of prompt-CoT pairs may not fit in this scenario. Beyond algorithmic differences, we also provide a theoretical analysis under standard conditions such as smoothness, which is missing in all these prior works. \citet{hu2024unveiling} offers a more detailed statistical perspective on CoT reasoning as latent variable modeling. However, they mainly focus on in-context learning rather than iterative fine-tuning. Lastly, recent  work \citet{zhong2025brite} proposes to use a neural network to approximate the posterior distribution of latent CoT rationales and train the network by PPO. They then use the network to generate responses for the SFT or DPO algorithms, whose techniques are completely different from ours. They also provide a theoretical convergence analysis of EM algorithm under the KL-regularized Markov Decision Process (MDP) framework using the techniques from RL theory literature \citep{zhong2024dpo}. This type of analysis typically ignores the optimization complexity and requires to find the exact minimizer of E-step and M-step at each iteration, thus differing from our techniques and results. 

\paragraph{RL algorithms for LLM post training.} Reinforcement learning has played a central role in the post-training of LLMs. The most prominent example is reinforcement learning from human feedback (RLHF), which brings the first generation of Chat-GPT3.5 \citep{ouyang2022training, bai2022training}. These methods typically optimize a learned Bradley-Terry reward model \citep{bradley1952rank} using the Proximal Policy Optimization (PPO) algorithm \citep{schulman2017proximal}. To simplify the RLHF, a long line of works has proposed simplified contrastive learning objectives based on pairwise comparisons, such as Slic-HF \citep{zhao2023slic}, DPO \citep{rafailov2023direct}, IPO \citep{azar2023general}, GPO \citep{tang2024generalized}, and their online variants \citep{xiong2023iterative, dong2024rlhf}. After the release of OpenAI's O1 \citep{jaech2024openai} and DeepSeek's R1 \citep{deepseekai2025deepseekr1incentivizingreasoningcapability}, RL methods based on verifier scores, which evaluate the correctness of the final answer, have attracted significant attention in building reasoning models \citep{yu2025dapo, shi2025efficient}. This project also follows the recent convention and falls into this framework. In particular, to avoid the critic in PPO, both industry and academic researchers have revisited REINFORCE-style algorithms \citep{williams1991function}, including ReMax \citep{li2023remax}, RLOO \citep{ahmadian2024back, kool2019buy}, GRPO \citep{shao2024deepseekmath}, Reinforce++ \citep{hu2025reinforce++}, and Reinforce-rej \citep{xiong2025minimalist}. We also mention in passing that the RAFT-type algorithms can also be viewed as a special variant of Reinforce-style algorithms, where reward is either $1$ or $0$. Throughout the rest of this paper, we primarily present our methods with RAFT, since it aligns most naturally with the EM objective. However, the presented dynamic inference budget scheduling can be of independent interests beyond the RAFT algorithm. We will show that integrating it into other RL-style methods, such as PPO, REINFORCE, or GRPO, can also bring notable improvements.
\section{Problem Formulation and EM Framework}\label{sec:em}
In this section, we formally define the problem, review existing approaches, and motivate our algorithm designs. We consider the chain-of-thought (CoT) reasoning process as:
\begin{align*}
    &  x \to y \to z, \quad x \in \mathcal{X}, y \in \mathcal{Y}, z \in \mathcal{Z},
\end{align*}
where $x$ is a prompt, $y$ is the intermediate CoT and $z$ is the final predicted answer. We fit the data $[x,y,z]$ within the following distribution function class:
\begin{equation} \label{eqn:fac}
   \Xi := \left\{ \mathbb{P}( y, z | x, \theta) =  \mathbb{P}(y | x, \theta) \cdot \mathbb{P}(z | y, \theta) \mid \theta \in \mathbb{R}^p \right\}.
\end{equation}
Our target is to learn a good model $M(\theta) \in \Xi$, which can minimize the negative log-likelihood of predicting the correct answer:
\begin{equation}\label{eq:loss}
   \mathcal{L}(\theta) = - \E_{x \sim d_0} \ln \mathbb{P} ( z | x ,\theta),
\end{equation}
where $d_0$ is a prompt distribution and $\mathbb{P}(\cdot|\theta)$ denotes the distribution induced by the model with parameters $\theta$. While many math datasets include reference CoT rationales, we do not include these reference rationales $y$ here. This is because recent practice typically does not fine-tune LLMs on these CoT rationales after the pre-training/SFT stages \citep{deepseekai2025deepseekr1incentivizingreasoningcapability, team2025kimi}. 

The process from $x$ to $z$ can be complex, making it difficult to directly estimate the distribution $P(z \mid x)$ from the observed data $[x, z]$. However, by introducing a latent intermediate variable $y$, the conditional distributions $P(y \mid x)$ and $P(z \mid y)$ are often easier to estimate, thereby simplifying the problem:
$$
\mathcal{L}(\theta) = - \E_{x \sim d_0} \ln \sum_{y \in \mathcal{Y}} \mathbb{P}(y | x ,\theta)  \mathbb{P}(z | x, y, \theta).
$$
Introducing the intermediate CoT $y$ naturally motivates the use of the expectation-maximization (EM) algorithm, which iteratively approximates the posterior over latent variables. 

\paragraph{Derivation of the EM algorithm.} We consider a training set $\mathcal{B} = \{ (x_i, z_i) \}_{i=1}^m$ drawn from $d_0$ with $z_i$ being the labeled ground-truth answer to illustrate the idea. Following the standard derivation of EM algorithm, we can bound the $\mathcal{L}(\theta)$ as follows:
\begin{equation}
    \label{eqn:elbo}
\begin{aligned}
 \mathcal{L}(\theta) &= - \sum_{i=1}^m  \ln \mathbb{P} ( z_i | x_i, \theta) = - \sum_{i=1}^m  \ln \left( \sum_{y \in \mathcal{Y}} Q_i (y) \frac{\mathbb{P}( y, z_i | x_i, \theta)}{Q_i(y)} \right)\\
 &\le - \sum_{i=1}^m  \sum_{y \in \mathcal{Y}} Q_i(y) \ln \left( \frac{\mathbb{P}( y, z_i| x_i, \theta)}{Q_i(y)} \right) \\
 &= \underbrace{- \sum_{i=1}^m  \E_{y \sim Q_i(y)} \ln \mathbb{P}( y, z_i| x_i, \theta)}_{\mathcal{J}_Q(\theta)} + \sum_{i=1}^m \mathcal{H}(Q_i(y)) \\
\end{aligned}
\end{equation}
where $Q_i(\cdot)$ is a probability distribution over $\mathcal{Y}$ so that $Q_i(y) \ge 0$ and $\sum_{y \in \mathcal{Y}}Q_i(y) = 1$. The last inequality is from the convexity of $-\ln(\cdot)$ and Jensen's inequality (Lemma~\ref{lem:jesen}) and $\mathcal{H}(p):= -\E_{t \sim p(t)} \ln p(t)$ is the entropy. 

In the E-step, our goal is to find a $Q_i(y)$ to get a sharper upper bound for $\mathcal{L}(\theta)$. In particular, the equality is reached with the posterior distribution of $y$:
\begin{equation} \label{eqn:posterior}
    Q_i(y) = \mathbb{P}(y | x_i, z_i, \theta) =  \frac{ \mathbb{P}(y,z_i | x_i, \theta)} {\mathbb{P}(z_i | x_i, \theta)  }
    =\frac{ \mathbb{P}(y | x_i, \theta) \cdot \mathbb{P}(z_i | y, \theta)}{\sum_{y \in \mathcal{Y}}  \mathbb{P}(y | x_i, \theta) \mathbb{P} (z_i | y, \theta) } := \frac{ \mathbb{P}(y | x_i, \theta) \cdot \mathbb{P}(z_i | y, \theta)}{ Z(x_i, z_i, \theta) },
\end{equation}
where $Z(x_i, z_i, \theta)$ denotes the normalization constant and the second equality uses \eqref{eqn:fac}. Here $-\mathcal{J}_Q(\theta) - \sum_{i=1}^m \mathcal{H}(Q_i(y))$ is referred to as the evidence lower bound (ELBO)\footnote{We consider the negative log-likelihood here so it becomes an upper bound} in the literature~\citep{bishop2006pattern, kingma2013auto}, and $\mathbb{P}(y|x_i,z_i,\theta)$ is the induced posterior distribution, which is not the equivalent to the autoregressive distribution of CoT when we present the LLMs with prompts and answers.

In the M-step, to minimize $\mathcal{L}(\theta)$, we can fix $Q_i(y)$ as in \eqref{eqn:posterior} and indirectly minimize $\mathcal{J}_Q(\theta)$. Note that we do not include the entropy loss because it is a constant in the M-step since we will fix $Q_i(y)$. To summarize, the EM algorithm will alternate between the following two steps: at iteration $t$,
\begin{enumerate}
    \item \texttt{E-step}: Update the posterior distribution of the latent CoT $Q^t$ according to \eqref{eqn:posterior} and obtain the $\mathcal{J}_{Q^t}$ defined in \eqref{eqn:elbo}. 
    \item \texttt{M-step}: The goal of the M-step is to update LLMs $M(\theta_t)$ to minimize $\mathcal{J}_{Q^t}$:
    $$
    - \frac{1}{m} \sum_{i=1}^m  \E_{y \sim Q^t_i(y)} \ln \mathbb{P}( y, z_i| x_i, \theta).
    $$
    The updated model $M(\theta_{t+1})$ is used for the next-iteration E-step.
\end{enumerate}
To apply the EM algorithm, we need to sample $y \sim Q^t_i(y)$ to approximate the objective. A central challenge is that computing $Q^t_i(y)$ and $\mathcal{J}_{Q^t}(\theta)$ requires summing over all possible latent CoT reasoning $y \in \mathcal{Y}$ because we need to get the normalization constant in \eqref{eqn:posterior}. This process is usually computationally intractable. Therefore, the objective $\mathcal{J}_{Q^t}(\theta)$ must be approximated via sampling.

To approximate the posterior $Q_i(y)$, one standard approach is rejection sampling \citep{neal2003slice}. We remark that we refer rejection sampling to the one in statistics, which is used to approximate a target distribution $Q_i(y)$ by a proposal distribution $\mathbb{P}(y|x, \theta)$, which we can sample from. In the literature of RLHF or LLM, rejection sampling is often referred to as the best-of-n sampling \citep{bai2022training, ouyang2022training}. Specifically, to approximate $Q_i(y)$ by $\mathbb{P}(y|x_i, \theta)$, the rejection sampling proceeds as follows:
\begin{enumerate}
    \item Sample $y \sim $ $\mathbb{P}(y|x_i, \theta)$;
    \item Draw $u \sim \mathrm{Uniform}([0, 1])$;
    \item Accept $y$ if $u \leq \frac{Q_i(y)}{\mathbb{P}(y|x_i, \theta) \cdot M}$, where $M>0$ satisfies $Q_i(y) \leq M \cdot \mathbb{P}(y|x_i, \theta)$ for all $y \in \cY$. 
\end{enumerate}
We notice that a valid choice of $M$ is $1/Z(x_i, z_i, \theta)$, leading to an accept probability of $\mathbb{P}(z_i|y, \theta)$. In mathematical reasoning, given the CoT rationale, the final predicted answer is typically of low randomness. Then, the rejection sampling in statistics aligns well with the best-of-n sampling, where we only keep all the responses with the correct answer. This eventually leads to the RAFT-type algorithms. We remark that this connection between the EM framework and the RAFT-type algorithms has been previously observed in \citet{singh2023beyond, sordoni2023joint}. 

However, we argue that the current best-of-n sampling implementation is overly coarse and insufficiently faithful to the true E-step. Specifically, if a prompt $x_i$ is very difficult for current $M(\theta_t)$, the density of $\mathbb{P}(y|x_i, \theta)$ will concentrate on the wrong CoT rationales so that $\mathbb{P}(z_i |y, \theta)$ is close to zero. In this case, the accept probability will be very low and we need many samples before we can accept one valid $y$. In contrast, if our current model $M(\theta_t)$ can already output correct CoT rationale $y$ in most of the time, we will accept most of the generated responses. Eventually, with a fixed inference budget, this uniform allocation ($n$ responses per prompt) tends to bias the accepted samples toward easier prompts with higher acceptance rates.

\begin{algorithm}[htb]
\caption{Meta Algorithm: \dibs-EM}
\label{alg:meta}
\begin{small}
\begin{algorithmic}[1]
\STATE \textbf{Input:} Initial parameters $\theta_0$, training samples $\mathcal{D} = \{(x_i, z_i)\}_{i=1}^n$, number of epochs $T$, initial posterior $Q^0 = \mathbb{P}(\cdot \mid \theta_0)$.
\FOR{$t = 0, \dots, T$}
    \STATE \textcolor{magenta}{$\triangleright$ \textbf{E-step (Expectation):}}
    \STATE Sample a set of samples $\mathcal{B}_t = \{x_i, z_i\}_{i=1}^m$. Update the posterior distribution over latent CoT rationales $Q^t(\cdot)$ using \eqref{eqn:posterior}.
    \STATE For each prompt $x_i$, compute the required number of samples $n_i^t$ according to (1) Theoretical Proposition~\ref{prop:opt} or (2) Practical Algorithm~\ref{alg:practical_dibs}. 
    \STATE Perform rejection sampling to obtain accepted responses $y$. Collect corresponding $(x_i, z_i, y)$ into $\mathcal{D}_i^t$, such that $y \sim Q_i^t(\cdot)$.
    
    \STATE \textcolor{teal}{$\triangleright$ \textbf{M-step (Maximization):}}
    \STATE Update model parameters via gradient descent using:
    \begin{equation*}
        \nabla_\theta M(\theta_t) = - \frac{1}{m} \sum_{i=1}^m \frac{1}{|\mathcal{D}_i^t|} \sum_{y_j \in \mathcal{D}_i^t} \nabla_\theta \log \mathbb{P}(y_j, z_i \mid x_i, \theta).
    \end{equation*}
\ENDFOR
\STATE \textbf{Output:} Final model $M(\theta_T)$.
\end{algorithmic}
\end{small}
\end{algorithm}

\subsection{Notation}

The true marginal negative log-likelihood is denoted as $\mathcal{L}(\theta)$. The negative ELBO at iteration $t$ is denoted as $\mathcal{L}^t$, which is equal to 
$$
\mathcal{L}^t(\theta) =  \mathcal{J}_{Q^t}(\theta) + \sum_{i=1}^m \mathcal{H}(Q_i(y)).  
$$
Our derivation in the next section will focus on $\mathcal{J}_{Q^t}(\theta)$ since entropy term is considered to be a constant in M-step when fixing $Q^t$. We also present a notation table in Table~\ref{tab:notation} to improve readability.
\section{Gradient Variance Minimization by Dynamic Sample Allocation}
\label{sec:method}
To address the limitations of best-of-n sampling, we propose a dynamic inference budget allocation strategy that adapts to the acceptance rates of rejection sampling for each prompt $x_i$. The overall meta-algorithm is presented in Algorithm~\ref{alg:meta}, and in what follows, we describe the budget allocation mechanism in detail.

\subsection{Dynamic Inference Budget Scheduling to Minimize Gradient Variance}\label{sec:n}
\paragraph{Unbiased gradient estimation.}
We begin by formulating the true gradient at iteration $t$ under the EM objective $\mathcal{J}_{Q^t}$: 
\begin{equation}\label{eq:gd}
    \nabla \mathcal{J}_{Q^t}(\theta) = - \sum_{i=1}^m \sum_{y \in \mathcal{Y}} Q^t_i(y) \nabla  \ln \mathbb{P}(y, z_i | x_i, \theta)  = - \sum_{i=1}^m \mathbb{E}_{y \sim Q_i^t} \nabla \ln \mathbb{P}(y, z_i | x_i, \theta),
\end{equation}
where $Q_i^t(y) = \mathbb{P}(y | x_i, z_i, \theta_{t-1})$ is the posterior distribution of $y$. However, this distribution is intractable to compute exactly. Therefore, we approximate $Q_i^t(y)$ via rejection sampling by drawing $n_i^t$ times from current LLMs $\mathbb{P}(y|x_i,\theta_{t-1})$. This leads to the following \textit{unbiased estimator} for \eqref{eq:gd}.
\begin{lemma}[Unbiased Gradient Estimator] In the iteration $t$, denoting $\mathcal{D}_i^t$ as the set of accepted samples on $y$ related to $(x_i, z_i)$, we have the following unbiased gradient estimator for $\mathcal{J}_{Q^t}$:
\begin{equation}\label{eq:gd_est}
    - \sum_{i=1}^m \frac{1}{n_i^t p_i^t} \sum_{y_j \in \mathcal{D}_i^t} \nabla \ln \mathbb{P}(y_j, z_i | x_i, \theta_{t-1}) ,
\end{equation}
where $p_i^t = \E_{y \sim \mathbb{P}(\cdot|x_i, \theta)} P(z_i | y, \theta) $ is the average accept rate of rejection sampling.
\end{lemma}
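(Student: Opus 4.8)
The plan is to show directly that the expectation of the estimator in \eqref{eq:gd_est} equals the true gradient \eqref{eq:gd}, evaluated at $\theta_{t-1}$. The central difficulty is that $\mathcal{D}_i^t$ has a \emph{random} cardinality, so the estimator is an average over a random number of terms and cannot be treated as a clean empirical mean. The key maneuver is to \emph{linearize} this random-size sum: for each fixed prompt $x_i$, I would view the rejection sampling as drawing exactly $n_i^t$ i.i.d.\ proposal samples $y_i^{(1)}, \dots, y_i^{(n_i^t)} \sim \mathbb{P}(\cdot \mid x_i, \theta_{t-1})$, each paired with an acceptance indicator $a_i^{(k)} \in \{0,1\}$, and to rewrite
\begin{equation*}
\sum_{y_j \in \mathcal{D}_i^t} \nabla \ln \mathbb{P}(y_j, z_i \mid x_i, \theta_{t-1}) = \sum_{k=1}^{n_i^t} a_i^{(k)}\, \nabla \ln \mathbb{P}\big(y_i^{(k)}, z_i \mid x_i, \theta_{t-1}\big).
\end{equation*}
This reformulation replaces the random summation range by a fixed one and pushes all the randomness into the jointly independent pairs $(y_i^{(k)}, a_i^{(k)})$, which is exactly what makes linearity of expectation applicable.

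Next I would compute the expectation of a single term by conditioning on the draw $y_i^{(k)}$ and applying the tower property. Given $y_i^{(k)} = y$, the indicator $a_i^{(k)}$ is Bernoulli whose success probability is the rejection-sampling accept probability; using the valid choice $M = 1/Z(x_i, z_i, \theta_{t-1})$ already identified in the text, this probability is precisely $\mathbb{P}(z_i \mid y, \theta_{t-1})$. Hence
\begin{equation*}
\mathbb{E}\big[a_i^{(k)} \nabla \ln \mathbb{P}(y_i^{(k)}, z_i \mid x_i, \theta_{t-1})\big] = \sum_{y \in \mathcal{Y}} \mathbb{P}(y \mid x_i, \theta_{t-1})\, \mathbb{P}(z_i \mid y, \theta_{t-1})\, \nabla \ln \mathbb{P}(y, z_i \mid x_i, \theta_{t-1}).
\end{equation*}
The remaining work is a bookkeeping identification: by the posterior formula \eqref{eqn:posterior}, $\mathbb{P}(y \mid x_i, \theta_{t-1}) \mathbb{P}(z_i \mid y, \theta_{t-1}) = Z(x_i, z_i, \theta_{t-1})\, Q_i^t(y)$, and by the definition of the accept rate, $p_i^t = \mathbb{E}_{y \sim \mathbb{P}(\cdot \mid x_i, \theta_{t-1})} \mathbb{P}(z_i \mid y, \theta_{t-1}) = Z(x_i, z_i, \theta_{t-1})$. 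Substituting, each of the $n_i^t$ terms contributes $p_i^t\, \mathbb{E}_{y \sim Q_i^t} \nabla \ln \mathbb{P}(y, z_i \mid x_i, \theta_{t-1})$.

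Finally, summing the $n_i^t$ identical expectations, dividing by the prefactor $1/(n_i^t p_i^t)$, and summing over $i$ recovers $-\sum_{i=1}^m \mathbb{E}_{y \sim Q_i^t} \nabla \ln \mathbb{P}(y, z_i \mid x_i, \theta_{t-1})$, which is exactly \eqref{eq:gd}; this establishes unbiasedness. I expect the only genuine obstacle to be the first step — correctly modeling the random-size accepted set via fixed-count indicators and verifying that the conditional acceptance probability is literally the quantity whose proposal-average defines $p_i^t$. Once that identification is clean, the normalization constant cancels exactly against $p_i^t$ and the rest is routine linearity; no smoothness or concentration estimates are needed for this statement.
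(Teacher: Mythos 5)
Your proposal is correct and follows essentially the same route as the paper's proof: both linearize the random-size sum over $\mathcal{D}_i^t$ into a fixed sum of $n_i^t$ indicator-weighted terms and then cancel the factor $n_i^t p_i^t$. The only difference is that you explicitly justify, via the tower property and the identification $p_i^t = Z(x_i, z_i, \theta_{t-1})$, the identity $\mathbb{E}_{y \sim \mathbb{P}(\cdot \mid x_i, \theta_{t-1})}\left[\bm{1}(\text{accept } y)\, \nabla \ln \mathbb{P}(y, z_i \mid x_i, \theta_{t-1})\right] = p_i^t\, \mathbb{E}_{y \sim Q_i^t} \nabla \ln \mathbb{P}(y, z_i \mid x_i, \theta_{t-1})$, which the paper asserts in a single step.
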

\begin{proof}
Taking the expectation of \eqref{eq:gd_est}, we obtain:
\begin{align*}
&\quad \mathbb{E} \left[ - \sum_{i=1}^m \frac{1}{n_i^t p_i^t} \sum_{y_j \in \mathcal{D}_i^t} \nabla\ln \mathbb{P}(y_j, z_i | x_i, \theta_{t-1}) \right] \\
&= - \sum_{i=1}^m \frac{1}{n_i^t p_i^t} \sum_{i=1}^{n_i^t} \mathbb{E}_{y_j \sim \mathbb{P}(\cdot | x_i, \theta_{t-1})} \left[ \bm{1}(\text{accept $y_j$}) \cdot \nabla  \ln \mathbb{P}(y_j, z_i | x_i, \theta_{t-1})   \right]    \\
&= -\sum_{i=1}^m \frac{1}{n_i^t p_i^t} \sum_{j=1}^{n_i^t} p_i^t \mathbb{E}_{y_j \sim Q_i^t} \nabla  \ln \mathbb{P}(y_j, z_i | x_i, \theta_{t-1}) \\
&= - \sum_{i=1}^m \mathbb{E}_{y \sim Q_i^t} \nabla  \ln \mathbb{P}(y, z_i | x_i, \theta_{t-1}),
\end{align*}
which is the same as \eqref{eq:gd}.
\end{proof}

\paragraph{Variance-aware sampling allocation.} While the estimator above is unbiased, its variance can vary significantly across prompts. Prompts with low acceptance rates introduce high variance due to the small number of accepted samples. Reducing variance is crucial for efficient training with stochastic gradient, as also emphasized in prior works in statistics and optimization, including \cite{roux2012stochastic, johnson2013accelerating, defazio2014saga, chen2018stochastic}.  
To design a more efficient sampling strategy, we analyze the variance of the gradient estimator and optimize the allocation.
\begin{align*}
 &\quad \mathbb{V} \left(  \sum_{i=1}^m \frac{1}{n_i^t p_i^t} \sum_{y_j \in \mathcal{D}_i^t} \nabla \left( \ln \mathbb{P}(y_j, z_i | x_i, \theta) \right)  \right)   \\
 &= \sum_{i=1}^m \frac{1}{(n_i^t p_i^t)^2} \sum_{j=1}^{n_i^t} \mathbb{V}_{y_j \sim \mathbb{P}(\cdot | x_i, \theta_{t-1})} \left( \bm{1}(\text{accept $y_j$} ) \nabla \left( \ln \mathbb{P}(y_j, z_i | x_i, \theta) \right)  \right)\\
 &\le \sum_{i=1}^m \frac{1}{(n_i^t p_i^t)^2} \sum_{j=1}^{n_i^t} \mathbb{E}_{y_j \sim \mathbb{P}(\cdot | x_i, \theta_{t-1})} \left\| \bm{1}(\text{accept $y_j$} ) \nabla \left( \ln \mathbb{P}(y_j, z_i | x_i, \theta) \right)  \right\|^2\\
 &= \sum_{i=1}^m \frac{1}{(n_i^t p_i^t)^2} \sum_{j=1}^{n_i^t} p_i^t \mathbb{E}_{y_j \sim Q_i^t} \left\|  \nabla \left( \ln \mathbb{P}(y_j, z_i | x_i, \theta) \right)  \right\|^2\\
 &= \sum_{i=1}^m \frac{1}{n_i^t p_i^t} \underbrace{ \mathbb{E}_{y \sim Q_i^t} \left\|  \nabla \left( \ln \mathbb{P}(y, z_i | x_i, \theta) \right)  \right\|^2}_{ G_{i}^2}.
\end{align*}
Given a fixed total sampling budget $N$, we seek to allocate $\{ n_i^t\}$ to minimize this upper bound:
$$
  \min \left\{  \sum_{i=1}^m \frac{G_i^2}{p_i^t n_i^t} \right\}, \quad \text{s.t.} \quad \sum_{i=1}^m n_i^t = N.$$

In practice, some prompts are totally beyond the ability of the current LLMs or cannot be evaluated by the verifier (e.g., due to some label error). This leads to extremely low acceptance rates and unstable gradient estimations. To mitigate this, we introduce a regularization term that penalizes sampling on such prompts. The revised objective becomes:
\begin{equation}\label{eq:sche_obj}
    \min \left\{ \sum_{i=1}^m  \frac{1}{1 + \alpha / (p_i^t)^\beta}  \frac{G_i^2}{p_i^t n_i^t} \right\}, \quad \text{s.t.} \quad \sum_{i=1}^m n_i^t = N,
\end{equation}
where $\alpha > 0, \beta \ge 2$ are hyperparameters that control the regularization strength. For example, as the accept rate $p_i^t \to 0$, the weight $(1 + \alpha / (p_i^t)^\beta)^{-1} \to 0$, which leads to the fact that sample size $n_i^t \to 0$ and prevents excessive sampling on uninformative prompts. Recent studies such as \cite{xiong2025minimalist} have also demonstrated the critical role it plays for stable training of online RL algorithms.

Solving the regularized optimization problem in \eqref{eq:sche_obj} yields the following closed-form solution to the optimal sampling allocation:
\begin{proposition} \label{prop:opt}
The optimal number of samples allocated to each prompt is:
    \begin{equation*}
    n_i^t = N \cdot \frac{G_i / \sqrt{p_i^t + \frac{\alpha}{(p_i^t)^{\beta - 1}} }}{\sum_{l=1}^n G_l / \sqrt{p_l^t + \frac{\alpha}{(p_l^t)^{\beta - 1}} } } \propto \frac{G_i}{\sqrt{ p_i^t + \frac{\alpha}{(p_i^t)^{\beta - 1}}} }, \quad \forall i= 1, \dots, m.
\end{equation*}
\end{proposition}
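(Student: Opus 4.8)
The plan is to treat the allocation problem \eqref{eq:sche_obj} as a continuous convex program (relaxing the integrality of the counts $n_i^t$) and solve it by Lagrange multipliers. First I would collapse the prompt-dependent coefficients into a single positive constant
$$
a_i := \frac{1}{1 + \alpha/(p_i^t)^\beta}\cdot \frac{G_i^2}{p_i^t},
$$
so that the objective reads $\sum_{i=1}^m a_i / n_i^t$ subject to the budget constraint $\sum_{i=1}^m n_i^t = N$ with $n_i^t > 0$. Since each map $n \mapsto a_i/n$ is strictly convex on $(0,\infty)$ (because $a_i > 0$) and the feasible set is an affine hyperplane intersected with the positive orthant, the program is strictly convex and has a unique minimizer; hence any stationary point of the Lagrangian is automatically the global optimum, and no further second-order check is needed.

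Next I would form the Lagrangian $\Lambda(\{n_i^t\},\lambda) = \sum_i a_i/n_i^t + \lambda(\sum_i n_i^t - N)$ and impose $\partial \Lambda/\partial n_i^t = 0$, which gives $-a_i/(n_i^t)^2 + \lambda = 0$, i.e. $n_i^t = \sqrt{a_i/\lambda} \propto \sqrt{a_i}$. Substituting this back into the constraint $\sum_i n_i^t = N$ to eliminate the multiplier $\lambda$ yields the normalized expression $n_i^t = N\,\sqrt{a_i}\,/\,\sum_{l}\sqrt{a_l}$.

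The only remaining step is the algebraic simplification of $\sqrt{a_i}$. Using the identity $p_i^t\bigl(1 + \alpha/(p_i^t)^\beta\bigr) = p_i^t + \alpha/(p_i^t)^{\beta-1}$, we obtain
$$
\sqrt{a_i} = \frac{G_i}{\sqrt{p_i^t\,\bigl(1 + \alpha/(p_i^t)^\beta\bigr)}} = \frac{G_i}{\sqrt{\,p_i^t + \alpha/(p_i^t)^{\beta-1}\,}},
$$
which recovers exactly the allocation stated in the proposition. I expect no genuine obstacle here: the argument is a routine Lagrange-multiplier computation, and the only points worth flagging explicitly are the continuous relaxation of the integer counts $n_i^t$ (standard for budget-allocation schemes of this type) and the convexity remark that certifies the stationary point as the global minimizer. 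As an alternative that sidesteps even the convexity discussion, one can derive the same result directly from the Cauchy--Schwarz inequality in Engel (Titu) form, namely $\sum_i a_i/n_i^t \ge \bigl(\sum_i \sqrt{a_i}\bigr)^2 / \sum_i n_i^t = \bigl(\sum_i \sqrt{a_i}\bigr)^2 / N$, with equality precisely when $n_i^t \propto \sqrt{a_i}$; this simultaneously produces the optimizer and the optimal value.
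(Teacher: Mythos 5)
Your proof is correct and its core route is the same as the paper's: both form the Lagrangian of the budget-constrained objective \eqref{eq:sche_obj}, impose stationarity to get $n_i^t \propto \sqrt{a_i}$ with $a_i = G_i^2/\bigl(p_i^t + \alpha/(p_i^t)^{\beta-1}\bigr)$, and eliminate the multiplier via the constraint $\sum_i n_i^t = N$. That said, your write-up is tighter than the paper's in two respects. First, you certify that the stationary point is the \emph{global} minimizer via strict convexity of $n \mapsto a_i/n$ on the positive orthant; the paper skips this entirely, only setting $\partial\mathcal{G}/\partial n_i^t = 0$ (and its displayed partial derivative in fact contains typos: a spurious factor of $2$ and a term $-\lambda n_i^t$ where $-\lambda$ is meant), so first-order conditions there are never linked to optimality. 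Second, your closing Cauchy--Schwarz argument in Engel form is a genuinely different and more elementary route: it produces the optimizer $n_i^t \propto \sqrt{a_i}$ and the optimal value $\bigl(\sum_i \sqrt{a_i}\bigr)^2/N$ in one line, with no calculus, no multiplier, and no convexity discussion needed, and it handles the equality case cleanly. Either of your two versions would stand as a self-contained improvement on the proof given in the paper's appendix.
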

The proof is deferred to Appendix~\ref{sec:proof}.

\begin{remark}\label{rmk:low_size}
Accepted sample size has a lower bound as:
\begin{equation*}
    N\sqrt{2} (\alpha (\beta - 1))^{1/(2 \beta)} \cdot \sum_{i=1}^m \frac{G_i}{\sum_{l=1}^m G_l} \cdot \frac{p_i^t}{\sqrt{p_i^t + \frac{\alpha}{(p_i^t)^{\beta - 1}} }}.
\end{equation*}
\end{remark}

\begin{algorithm}[htb]
\caption{\dibs: Practical Implementation}
 \label{alg:practical_dibs}
 \begin{small}
  \begin{algorithmic}[1]
      \STATE \textbf{Input:} Model parameter $\theta$, training samples $\{x_i, z_i \}_{i=1}^n$, total inference budget $N$,
      parameter estimation sample size $N'$, penalty parameter $\{ \alpha, \beta\}$.
   \FOR{$i = 1, \dots, m$}
   \STATE For each prompt $x_i$, sample $N'$ times to get $\{ x_i, y_i^j, z_i^j \}_{j=1}^{N'}$.
   \STATE Estimate the accept rate $p_i^t$ and Lipschitz bounds $G_i$ on each prompt $x_i$ as
   \begin{align*}
       & p_i = \frac{\sum_{j=1}^{N'} \bm{1}(z_i^j = z_i)}{N'},\\
       & G_i = \sum_{1\le j \le N', z_i^j = z_i} \frac{1}{N' p_i} \| \nabla_\theta \ln \mathbb{P}{(y_i^j, z_i | x_i, \theta)} \|_2.
   \end{align*}
   \STATE Calculate sample size $\{n_i\}$ as:
   \begin{align*} 
    &  n_i = N \cdot \frac{G_i / \sqrt{p_i + \frac{\alpha}{(p_i)^{\beta - 1}} }}{\sum_{l=1}^n G_l / \sqrt{p_l + \frac{\alpha}{(p_l)^{\beta - 1}} } } .
   \end{align*}
   \ENDFOR
     \STATE  \textbf{Output:} $\{n_i\}_{i=1}^m$.
  \end{algorithmic}
 \end{small}
\end{algorithm}

With Remark~\ref{rmk:low_size}, our proposed budget scheduling method is not only efficient but also guarantees a sufficient number of accepted samples during the training process, even in the presence of informative prompts. Algorithm \ref{alg:practical_dibs} shows a practical implementation of the \dibs algorithm.

\subsection{Theoretical Result}
In this section, we present the theoretical guarantee of loss convergence. 
Without loss of generalization, we assume that each E-step is followed by $k$ M-steps. For the $t$-th E-step, let $b_i^r$ denote the batch size for prompt $x_i$ at the $r$-th M-step, where $kt-k < r \le kt$, and the corresponding sample batch is denoted by $\mathcal{B}_i^r$. The upper bound loss function we construct is denoted by $\mathcal{L}_t (\theta)$:\footnote{The expectation is taken over all sources of randomness, including the sampled data $\{\mathcal{D}_i^t\}$ and the selected batches $\{ \mathcal{B}_i^r \}$. }
\begin{equation*}
    \mathcal{L}_t(\theta) := - \E_{x \sim d_0} \E_{y \sim Q_i^t(y)}   \ln \mathbb{P}(y, z | x, \theta) .
\end{equation*}
 In further analysis, we take the following notations for simplifying the statement:
 \begin{align*}
  & \Delta_1(k, T) := \sum_{t=1}^T  \sum_{r=0}^{k - 1} \E \|\nabla_\theta \mathcal{L}_t(\theta_{kt-k+r })\|^2 > 0, \quad \Delta_2(k, T) := \sum_{t=1}^T  \E \| \sum_{r=0}^{k - 1} \nabla \mathcal{L}_t (\theta_{kt-k + r}) \|^2 > 0,\\
  &  \Omega(k, T) :=\sum_{t=1}^T \sum_{r=0}^{k - 1}  \E V \left( g_{kt-k+r} \right) > 0,
 \end{align*}
 where
 \begin{equation*}
      V(g_{kt-k+r}) = \mathbb{V} \left( - \frac{1}{m} \sum_{i=1}^m \frac{1}{b_i^{kt-k+r}} \sum_{y_j \in \mathcal{B}_i^{kt-k+r}} \nabla_\theta \left( \ln \mathbb{P}(y_j, z_i | x_i, \theta_{kt-k+r})   \right)\right).
 \end{equation*}
Under mild smoothness conditions, we can derive the following result.
\begin{theorem}[Decreasing rate with smoothness condition.]\label{thm:smo}
Suppose $- \ln \mathbb{P}(y,z | x, \theta)$ is $1 / \gamma$-smooth with respect to $\theta$. If $0 < \eta \le \gamma$, then the proposed algorithm satisfies that
\begin{equation*}
  \E \left[ \mathcal{L}(\theta_{kT}) - \mathcal{L} (\theta^*) \right] - \E \left[ \mathcal{L}(\theta_0) - \mathcal{L} (\theta^*) \right]  \le - \frac{\eta}{2} \Delta_1(k, T) + \frac{\eta^2}{2 \gamma} \Omega(k, T).
\end{equation*}
\end{theorem}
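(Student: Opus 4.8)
The plan is to run a standard smooth (possibly non-convex) stochastic-descent analysis on each block of $k$ M-steps, and then glue the blocks together using the two defining properties of the EM surrogate: the upper-bound property $\mathcal{L}(\theta) \le \mathcal{L}^t(\theta)$ for all $\theta$, and the tightness $\mathcal{L}^t(\theta_{kt-k}) = \mathcal{L}(\theta_{kt-k})$ at the point where the $t$-th E-step fixes $Q^t$ to the exact posterior. Here $\mathcal{L}^t = \mathcal{J}_{Q^t} + \sum_i \mathcal{H}(Q_i^t)$ is the full negative ELBO; since the entropy term does not depend on $\theta$, we have $\nabla \mathcal{L}^t = \nabla \mathcal{L}_t$ and the per-step decrease of $\mathcal{L}^t$ equals that of $\mathcal{L}_t$, so it is harmless to phrase the gradient terms through $\mathcal{L}_t$ as in the statement while exploiting the upper-bound and tightness properties of $\mathcal{L}^t$.

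First I would establish a one-step descent inequality. Because $-\ln \mathbb{P}(y,z \mid x,\theta)$ is $1/\gamma$-smooth in $\theta$ and $\mathcal{L}_t$ is an expectation of such terms, $\mathcal{L}_t$ is itself $1/\gamma$-smooth. Writing the M-step update as $\theta_{s+1} = \theta_s - \eta g_s$ with $s = kt-k+r$, the descent lemma gives $\mathcal{L}_t(\theta_{s+1}) \le \mathcal{L}_t(\theta_s) - \eta \langle \nabla \mathcal{L}_t(\theta_s), g_s\rangle + \frac{\eta^2}{2\gamma}\|g_s\|^2$. Taking conditional expectation given $\theta_s$ and using the unbiasedness $\mathbb{E}[g_s \mid \theta_s] = \nabla \mathcal{L}_t(\theta_s)$ from the Unbiased Gradient Estimator lemma, together with the bias-variance split $\mathbb{E}[\|g_s\|^2 \mid \theta_s] = \|\nabla \mathcal{L}_t(\theta_s)\|^2 + V(g_s)$, yields $\mathbb{E}[\mathcal{L}_t(\theta_{s+1})] \le \mathbb{E}[\mathcal{L}_t(\theta_s)] - \eta(1 - \frac{\eta}{2\gamma})\mathbb{E}\|\nabla \mathcal{L}_t(\theta_s)\|^2 + \frac{\eta^2}{2\gamma}\mathbb{E} V(g_s)$. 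The hypothesis $\eta \le \gamma$ forces $1 - \frac{\eta}{2\gamma} \ge \frac12$, so the gradient coefficient is at most $-\frac{\eta}{2}$.

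Next I would chain this inequality across the $k$ M-steps inside block $t$ (summing $r = 0,\dots,k-1$), which telescopes the $\mathcal{L}_t$ values and produces $\mathbb{E}[\mathcal{L}_t(\theta_{kt})] \le \mathbb{E}[\mathcal{L}_t(\theta_{kt-k})] - \frac{\eta}{2}\sum_{r=0}^{k-1}\mathbb{E}\|\nabla\mathcal{L}_t(\theta_{kt-k+r})\|^2 + \frac{\eta^2}{2\gamma}\sum_{r=0}^{k-1}\mathbb{E} V(g_{kt-k+r})$. The same inequality holds with $\mathcal{L}_t$ replaced by $\mathcal{L}^t$, the difference being only the $\theta$-independent entropy constant. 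Invoking tightness at the block's start, $\mathcal{L}^t(\theta_{kt-k}) = \mathcal{L}(\theta_{kt-k})$, and the upper bound at its end, $\mathcal{L}(\theta_{kt}) \le \mathcal{L}^t(\theta_{kt})$, converts this into a recursion in the true loss: $\mathbb{E}[\mathcal{L}(\theta_{kt})] \le \mathbb{E}[\mathcal{L}(\theta_{kt-k})] - \frac{\eta}{2}\sum_{r}\mathbb{E}\|\nabla\mathcal{L}_t(\theta_{kt-k+r})\|^2 + \frac{\eta^2}{2\gamma}\sum_r \mathbb{E} V(g_{kt-k+r})$. Since consecutive blocks share endpoints ($\theta_{kt}$ is both the end of block $t$ and the start of block $t+1$), summing over $t = 1,\dots,T$ telescopes the true-loss terms to $\mathbb{E}[\mathcal{L}(\theta_{kT})] - \mathbb{E}[\mathcal{L}(\theta_0)]$, and the gradient and variance sums assemble into $\Delta_1(k,T)$ and $\Omega(k,T)$; subtracting the common $\mathcal{L}(\theta^*)$ from both the $\theta_{kT}$ and $\theta_0$ terms then gives exactly the claimed bound.

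The main obstacle I anticipate is not the SGD bookkeeping but correctly exploiting the EM structure so that the per-block surrogate descent transfers to the single target $\mathcal{L}$: one must use the upper bound in one direction at the block endpoint and equality at the block start, and be careful that the surrogate changes from block to block (each $\mathcal{L}^t$ is a different function), so that only the true loss $\mathcal{L}$ — not $\mathcal{L}^t$ — telescopes cleanly across $t$. A secondary point to handle carefully is the conditioning argument, so that unbiasedness and the bias-variance decomposition are applied conditionally on the current iterate before the tower property collapses everything to a total expectation, and that the exact posterior $Q^t$ is treated as deterministic given $\theta_{kt-k}$, charging only the M-step sampling noise to $V(g_s)$.
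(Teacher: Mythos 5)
Your proposal is correct and follows essentially the same route as the paper's proof: a descent-lemma step using $1/\gamma$-smoothness, unbiasedness plus the bias--variance split of $\E\|g_s\|^2$, the condition $\eta\le\gamma$ to absorb the gradient term into $-\frac{\eta}{2}\E\|\nabla\mathcal{L}_t\|^2$, and the ELBO sandwich $\mathcal{L}(\theta_{kt})\le\mathcal{L}^t(\theta_{kt})$, $\mathcal{L}^t(\theta_{kt-k})=\mathcal{L}(\theta_{kt-k})$ to telescope the true loss across blocks. Your explicit handling of the $\theta$-independent entropy term and of the conditioning/tower argument is in fact slightly more careful than the paper's write-up, but the argument is the same.
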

In Theorem~\ref{thm:smo}, with sufficiently large enough sample size, $\Omega(k, T)$ will be small enough, which ensures that the right-hand side of the inequality is negative. This guarantees that the loss function decreases at each iteration.

Furthermore, if the loss function exhibits convexity, we can derive the following result:
\begin{theorem}[Decreasing rate with smooth and convex condition.]\label{thm:conv}
Suppose $- \ln \mathbb{P}(y,z | x, \theta)$ is $1 / \gamma$-smooth and convex with respect to $\theta$. If $0 < \eta \le \gamma / 2$, then the proposed algorithm satisfies that
\begin{align*}
& \E \left[ \mathcal{L}(\theta_{kT}) - \mathcal{L} (\theta^*) \right] - \E \left[ \mathcal{L}(\theta_0) - \mathcal{L} (\theta^*) \right]  \le - \frac{\eta}{2 k } \Delta_2(k,T) -  \frac{\eta}{ 4 k}  \Delta_1(k,T)  +   \frac{\eta}{4 k}  \Omega(k, T).
\end{align*}
\end{theorem}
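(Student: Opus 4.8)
The plan is to reduce the statement about the true marginal loss $\mathcal{L}$ to a per-E-step analysis of stochastic gradient descent on the fixed convex surrogate $\mathcal{L}_t$, and then sum over the $T$ outer iterations. First observe that the $\mathcal{L}(\theta^*)$ terms on the two sides cancel, so the left-hand side is just $\E\mathcal{L}(\theta_{kT}) - \E\mathcal{L}(\theta_0)$. The EM construction in \eqref{eqn:elbo}--\eqref{eqn:posterior} supplies the majorization $\mathcal{L}(\theta) \le \mathcal{L}_t(\theta) + H_t$ for every $\theta$, where $H_t = \sum_i \mathcal{H}(Q_i^t)$ is a $\theta$-independent entropy constant and, crucially, the bound is tight at $\theta_{kt-k}$, the parameter at which the posterior $Q^t$ was formed: $\mathcal{L}(\theta_{kt-k}) = \mathcal{L}_t(\theta_{kt-k}) + H_t$. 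Subtracting cancels $H_t$ and yields $\mathcal{L}(\theta_{kt}) - \mathcal{L}(\theta_{kt-k}) \le \mathcal{L}_t(\theta_{kt}) - \mathcal{L}_t(\theta_{kt-k})$, whose left side telescopes over $t$, leaving $\E\mathcal{L}(\theta_{kT}) - \E\mathcal{L}(\theta_0) \le \sum_{t=1}^T \E[\mathcal{L}_t(\theta_{kt}) - \mathcal{L}_t(\theta_{kt-k})]$. It then suffices to control, for each $t$, the decrease of $\mathcal{L}_t$ across its $k$ inner M-steps and to match the resulting aggregates with $\Delta_1,\Delta_2,\Omega$.

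\textbf{Per-E-step analysis.} Within E-step $t$ write $\phi_r := \theta_{kt-k+r}$, $\nabla_r := \nabla \mathcal{L}_t(\phi_r)$, and $\phi_{r+1} = \phi_r - \eta g_r$, where by the unbiased-gradient Lemma $g_r$ is unbiased for $\nabla_r$ given the filtration $\mathcal{F}_r$ generated by the randomness before the $r$-th minibatch, with conditional variance $V(g_r)$. Since $\mathcal{L}_t$ inherits $1/\gamma$-smoothness and convexity from $-\ln\mathbb{P}$ (it is an expectation of such functions), the descent lemma together with $\E\|g_r\|^2 = \E\|\nabla_r\|^2 + \E V(g_r)$ reproduces the same per-step inequality used for Theorem~\ref{thm:smo}. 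The new ingredient is convexity, which I would use to move from these per-step controls (which only generate $\sum_r\|\nabla_r\|^2$) to the aggregate displacement $\phi_k - \phi_0 = -\eta\sum_r g_r$, extracting the correlated-progress term $\|\sum_r\nabla_r\|^2$ and, because averaging $k$ stochastic gradients reduces their variance, a variance contribution carrying the extra $1/k$ factor. Combining the smooth descent bound with this convexity-based aggregate bound in a convex combination should give a per-E-step estimate of the form $-\frac{\eta}{2k}\E\|\sum_r\nabla_r\|^2 - \frac{\eta}{4k}\sum_r\E\|\nabla_r\|^2 + \frac{\eta}{4k}\sum_r\E V(g_r)$.

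\textbf{Variance bookkeeping and step size.} The noise terms $\xi_r := g_r - \nabla_r$ form a martingale-difference sequence with respect to $\{\mathcal{F}_r\}$, so $\E\|\sum_r\xi_r\|^2 = \sum_r \E\|\xi_r\|^2 = \sum_r \E V(g_r)$ by orthogonality. The condition $\eta \le \gamma/2$ (strictly stronger than the $\eta\le\gamma$ of the smooth case) is what keeps the coefficient of the gradient-norm terms nonnegative and pins the variance coefficient at $\frac{\eta}{4k}$ once the averaging gain is incorporated. Summing the per-E-step estimate over $t=1,\dots,T$ then identifies $\sum_t\sum_r\E\|\nabla_r\|^2 = \Delta_1(k,T)$, $\sum_t\E\|\sum_r\nabla_r\|^2 = \Delta_2(k,T)$, and $\sum_t\sum_r\E V(g_r) = \Omega(k,T)$, producing the claimed inequality.

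\textbf{Main obstacle.} I expect the difficulty to be precisely the emergence of $\Delta_2 = \sum_t\E\|\sum_r\nabla_r\|^2$ together with the $1/k$ improvement on the variance term: a naive step-by-step descent argument (as in Theorem~\ref{thm:smo}) only ever yields $\sum_r\|\nabla_r\|^2$ and a variance coefficient of order $\eta^2/\gamma$ with no $1/k$ gain, so convexity must be exploited \emph{globally} over the $k$ inner steps rather than one step at a time. The accompanying technical nuisance is the correlation between $\nabla_r = \nabla\mathcal{L}_t(\phi_r)$ and the noise $\xi_s$ injected at earlier inner steps $s<r$ (through which $\phi_r$ depends on $\xi_s$); controlling the cross terms $\E\langle \nabla_r,\xi_s\rangle$ via tower-property conditioning on $\mathcal{F}_r$ and the smoothness of $\mathcal{L}_t$, while keeping every coefficient consistent with the stated $\eta\le\gamma/2$, is the crux of the argument.
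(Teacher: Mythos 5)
Your plan matches the paper's proof essentially step for step: the paper likewise telescopes the EM majorization $\E[\mathcal{L}(\theta_{kt})-\mathcal{L}(\theta_{kt-k})]\le\E[\mathcal{L}_t(\theta_{kt})-\mathcal{L}_t(\theta_{kt-k})]$, combines the per-step smoothness descent inequality with a global convexity bound obtained by telescoping $\|\theta_{kt-k+r}-\theta_{kt-k}\|^2$ (so that the aggregate displacement $-\eta\sum_r g_{kt-k+r}$ yields $\Delta_2$ and the $1/k$-scaled variance via martingale orthogonality), averages over the $k$ inner steps, and uses $\eta\le\gamma/2$ to pin the coefficients at $\eta/(4k)$. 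The caveat you flag --- the cross terms $\E\langle\nabla\mathcal{L}_t(\theta_{kt-k+r}),\xi_s\rangle$ for $s<r$ --- is indeed the delicate point, and the paper's own proof silently assumes they vanish when it writes $\E\|\theta_{kt}-\theta_{kt-k}\|^2=\eta^2\big(\E\|\sum_r\nabla\mathcal{L}_t(\theta_{kt-k+r})\|^2+\sum_r\E V(g_{kt-k+r})\big)$, so you are not missing anything the paper supplies.
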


In Theorem~\ref{thm:conv}, the right-hand side includes an additional negative term, $- \Delta_2(k, T)$, which indicates a faster rate of decrease in the loss function. Specifically, during each E-step, the gradients across M-steps vary only slightly, this implies that:
\begin{equation*}
   \E \| \sum_{r=0}^{k - 1} \nabla \mathcal{L}_t (\theta_{kt-k + r}) \|^2 \approx k^2 \E \| \nabla \mathcal{L}_t (\theta_{kt-k}) \|_2^2 \approx k   \sum_{r=0}^{k - 1} \E \| \nabla \mathcal{L}_t (\theta_{kt-k + r}) \|^2 \Longrightarrow \Delta_2(k, T) \approx k \Delta_1(k, T),
\end{equation*}
which further induces that
\begin{align*}
  &  \frac{\eta}{2 k } \Delta_2(k,T) +  \frac{\eta}{ 4 k}  \Delta_1(k,T) \approx \left( \frac{\eta}{2} + \frac{\eta}{4k} \right) \Delta_1(k, T) > \frac{\eta}{2} \Delta_1(k, T), \\
  & \frac{\eta}{4k} \Omega(k, T) < \frac{\eta^2}{2 \gamma} \Omega(k,T),
\end{align*}
as well as
\begin{equation*}
  - \frac{\eta}{2 k } \Delta_2(k,T) -  \frac{\eta}{ 4 k}  \Delta_1(k,T)  +   \frac{\eta}{4 k}  \Omega(k, T) <-  \frac{\eta}{ 2}  \Delta_1(k,T)  +   \frac{\eta^2}{2 \gamma}  \Omega(k, T).
\end{equation*}
All of the proofs are deferred to Appendix~\ref{sec:proof}. 

\paragraph{Guaranteed Decrease in Our Proposed Method.} From the theorems above, we know that with a sufficiently large sample size, $\Omega(k, T)$ becomes small, and the upper bound for $ \E \left[ \mathcal{L}(\theta_{kT}) - \mathcal{L} (\theta^*) \right] - \E \left[ \mathcal{L}(\theta_0) - \mathcal{L} (\theta^*) \right]$ is strictly negative. This implies that our proposed method consistently decreases the objective loss, thereby ensuring an effective training process.

\paragraph{Reduction of the variance term $\Omega(k, T)$.} According to the upper bounds derived above, a smaller $\Omega(k, T)$ leads to more efficient training. On one hand, increasing the sample size reliably reduces this term. On the other hand, under a finite budget in practice, an effective budget scheduling strategy can also reduce $\Omega(k, T)$ and thereby improve training efficiency. Our algorithm achieves this via the scheduling scheme defined in Algorithm~\ref{alg:practical_dibs}.

\paragraph{Trade-off in Budget Scheduling Update Frequency.} In our algorithm, the budget scheduling is updated every $k$ optimization steps. Updating more frequently (i.e., using a smaller $k$) improves training efficiency but increases computational overhead. Conversely, updating less frequently reduces computation, but as optimization proceeds, the loss gradient norm $\| \nabla_\theta \mathcal{L}_t \|_2$ tends to become small, which can slow down training. Therefore, choosing an appropriate value of $k$ requires balancing this trade-off between efficiency and computational cost.

\subsection{Practical Implementation} \label{sec:practical}

\paragraph{Practical implementation of GVM.} In this subsection, we describe how to implement the inference budget allocation strategy in practice, as summarized in Proposition~\ref{prop:opt}, and the implementation is referred to \textbf{GVM - Gradient Variance Minimization}. Although the optimal sample sizes are of closed form, the expression involves the average accept rate $p_i^t$ and the gradient norm $G_{i,t}$, both of which depend on the posterior $Q_i^t(\cdot)$ and are therefore not available directly. Specifically, suppose that we are given the training set $\{x_i, z_i\}_{i=1}^n$ and current LLM $M(\theta)$. We can write:
$$
\begin{aligned}
  p_i &= \E_{y \sim \mathbb{P}(\cdot|x_i, \theta)} \mathbb{P}(z_i|y),\quad
  G_{i} &= \mathbb{E}_{y \sim Q_i^t(\cdot)} \| \nabla  \ln \mathbb{P}(y, z_i | x_i, \theta) \|.
\end{aligned}
$$
We also use sampling to estimate these quantities. We can first generate $N'$ samples per prompt to get $\{ x_i, y_i^j, z_i^j \}_{j=1}^{N'}$. Then, we can compute the following empirical estimators:
   \begin{align*}
       & p_i = \frac{\sum_{j=1}^{N'} \bm{1}(z_i^j = z_i)}{N'},\quad G_i = \sum_{1\le j \le N', z_i^j = z_i} \frac{1}{N' p_i} \| \nabla_\theta \ln \mathbb{P}{(y_i^j, z_i | x_i, \theta)} \|_2.
   \end{align*}
Then, we simply plug these empirical estimators into Proposition~\ref{prop:opt} to get the sample sizes. The entire procedure is summarized in Algorithm~\ref{alg:practical_dibs}.

\paragraph{Visualization} 
\begin{figure}[h]
    \centering
    \includegraphics[width=0.95\textwidth]{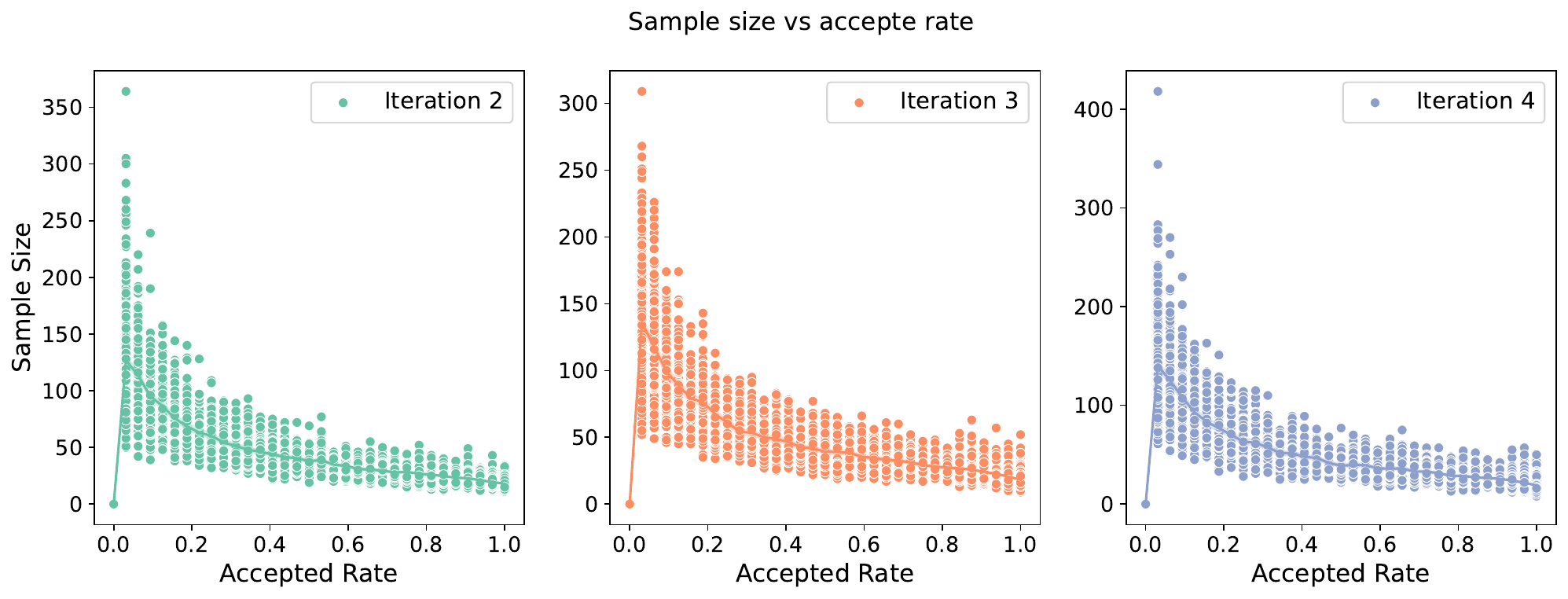}
    \caption{The assigned sample budget in \dibs-RAFT++ with $N^\prime=32,N=32n$ for three iterations.}
    \label{fig:sample size vs accept rate iters}
\end{figure}
We present the relationship between the final sample budget and the accept rates of different prompts for \dibs-RAFT++ in Figure \ref{fig:sample size vs accept rate iters}. From the trend, it could be inferred that the sample budget allocated to prompts increases smoothly as the accept rates decrease. However, other affecting factors together determine the trend of the sample budget besides the hardness, where the Lipschitz coefficients measuring the contribution of the samples by the gradients play a role. If we do not take the impact of Lipschitz coefficients $\{G_i\}$ into account, then we could have the allocated sample budget to prompts with different levels of hardness measured by accept rates as shown in Figure \ref{fig:sample_weight}.

\paragraph{\dibs-RAFT++.} We implement \dibs and the meta EM Algorithm~\ref{alg:meta} in a highly online fashion, building on the RAFT++ framework. In each iteration $t$, we draw a set of prompts $\{x_i,z_i\} \sim d_0$ and use the current model $\theta_{old}$ to collect $\cD^t = \cup_i \cD_i^t$ as the replay buffer where the inference budget allocation is determined via Algorithm~\ref{alg:practical_dibs}. RAFT++ then uses these samples to compute a stochastic gradient estimator of the objective $\mathcal{J}_{Q^t}$. To accelerate training, we perform multiple gradient steps per iteration in a mini-batch way. This causes the model distribution to shift away from the distribution used to generate the data. To address this mismatch, RAFT++ incorporates importance sampling and clipping strategies from PPO \citep{schulman2017proximal} into the original RAFT, arriving at the following loss function on the prompt-response pair $(x,a)$:
\begin{equation} \label{eqn:token_level_raftpp}
        \mathcal{L}^{\text{RAFT++}}(\theta) = \frac{1}{|a|}\sum_{t=1}^{|a|} \Big[ \min\Big(s_t(\theta), \mathrm{clip}(s_t(\theta), 1-\epsilon,1+\epsilon) \Big)\Big]\cdot\mathcal{I}\big( r(x,a) = \argmax_{i} r(x, a_i) \big),
\end{equation}
where $s_t(\theta) = \frac{\pi_{\theta}(a_t|x, a_{1:t-1})}{\pi_{\theta_\text{old}}(a_t|x, a_{1:t-1})}$ and $a_t$ is the t-th token of $a$. Here, the indicator ensures that we only train on accepted responses: those approximating the posterior via rejection sampling.

\paragraph{Extension to the RL algorithms.} While we focus primarily on RAFT-like algorithms, the proposed \dibs\ strategy can be readily adapted to other RL-style fine-tuning algorithms. We focus on the GRPO, which receives significant attention recently due to its successful application to training DeepSeek-R1. Specifically, for each prompt $x$, GRPO will sample $m > 1$ responses and compute the following advantage for the t-th token of the i-th response: 
$$
A_{t}(x,a_i) = \frac{r_{i} - \mathrm{mean} (r_1, \cdots r_m)}{\mathrm{std}(r_1,\cdots,r_m)},
$$
where $r_i$ denotes the final reward of the i-th response. This leads to the following loss function for GRPO,
\begin{align*}
\mathcal{J}_{\text{GRPO}}(\theta) = \mathbb{E}_{x, \{y_i\}_{i=1}^{m} \sim \pi_{\theta_{\text{old}}}(y|x)} \Bigg[
\frac{1}{m} \sum_{i=1}^{m} \frac{1}{|y_i|} \sum_{t=1}^{|y_i|} \bigg\{
\min \Big[
\frac{\pi_{\theta}(y_{i,t} | x, y_{i,<t})}{\pi_{\theta_{\text{old}}}(y_{i,t} | x, y_{i,<t})} \hat{A}_{i,t},\\
\,
\text{clip} \big(
\frac{\pi_{\theta}(y_{i,t} | x, y_{i,<t})}{\pi_{\theta_{\text{old}}}(y_{i,t} | x, y_{i,<t})},
1 - \epsilon, 1 + \epsilon
\big) \hat{A}_{i,t}
\Big] - \beta \mathbb{D}_{\text{KL}}[\pi_{\theta} \| \pi_{\text{ref}}]
\bigg\}
\Bigg].
\end{align*}

\section{Experiments and Results} 
\label{sec:experiments}
In this section, we validate the effectiveness of the proposed algorithm by real-world experiments. We will implement our approach with both RAFT++ and GRPO. 

\subsection{Experiments Setup}
\paragraph{Datasets and Models} To validate the effectiveness of the proposed methods, we conduct experiments with Qwen2.5-Math-1.5B and Qwen2.5-Math-7B \citep{yang2024qwen25mathtechnicalreportmathematical}. We focus on the mathematical reasoning task and use Math-Verify\footnote{\url{https://github.com/huggingface/Math-Verify}} as the verifier.

\paragraph{Hyperparameters} We utilize verl \citep{sheng2024hybridflow} as the training framework, and implement the RAFT++ as \citet{xiong2025minimalist} show that the additional importance sampling and clipping improve over the vanilla RAFT \mbox{\citep{dong2023raft}}. For each iteration, we use a prompt batch size of 1024, and use a mini-batch size 256 for gradient update. The max prompt length is set to be 1024, and the models are allowed to generate at most 3072 tokens so that they do not exceed the context window of 4096 tokens. There is no warmup stage and the learning rate is chosen to be a constant $1e-6$. For the training dataset, we use the Numina-Math \citep{numina_math_datasets}. For simplicity of implementation, we update the posterior distribution (i.e., the \dibs in Algorithm~\ref{alg:practical_dibs}) every $10$ steps. In other words, we will compute the inference buget allocation for the next $10$ M-steps and take $10$ M-steps before entering another E-step. The $10$ M-steps and the $1$ E-step are together referred to as one iteration of \dibs-RAFT++. We also experiment with the clip higher \citep{yu2025dapo} to use an asymmetric clip threshold design. Specifically, we clip the importance sampling ratio $\pi_\theta/\pi_{\theta_{\text{old}}}$ to the range $(1-\varepsilon_{\text{low}},1+\varepsilon_{\text{high}})$, where $\varepsilon_{\text{low}}=0.2$, and $\varepsilon_{\text{high}}=0.28, 0.4\text{\ then\ }0.35$ for RAFT++ and \dibs-RAFT++ respectively. We omit this technique in GRPO training as we observe it leads to worse performance in our experiments.

For detailed experiments setup, please refer to Appendix \ref{app:exp detail}.

\subsection{Main Results}
In this section, we summarize the results from integrating \dibs\ into both RAFT++ and GRPO algorithms as a sample budget rebalancing strategy. The performance is measured by Average @ 8, which means we randomly sample 8 instances from the model with a non-zero temperature, and take the average accuracy as the final result. For Qwen2.5-Math-1.5B, we use a temperature of 1.0 in evaluation, while for Qwen2.5-Math-7B, we use a temperature of 0.7 as the entropy loss is higher after integrating the clip higher trick, which leads to more diverse outputs.

\begin{table}[h]
    \centering
    \caption{Performance of different algorithms across five benchmarks including Math500 \citep{hendrycks2021measuring}, Minerva Math \citep{lewkowycz2022solving}, Olympiad Bench \citep{he2024olympiadbench}, AIME24, and AMC23. From the results, we could observe that after reweighting the sample size of prompts, \dibs-RAFT++ and \dibs-GRPO could indeed outperform both RAFT++ without sample size reweighting and GRPO.}
    \begin{adjustbox}{max width=\textwidth}
    \begin{NiceTabular}{cl|cccccc}
    \toprule\midrule
        \multicolumn{2}{c}{\textbf{Method}} & \textbf{Math500} & \textbf{Minerva Math} & \textbf{Olympiad Bench} & \textbf{AIME24} & \textbf{AMC23} & \textbf{5 Average} \\
        \midrule
        \multirow{5}{*}{\makecell{Qwen2.5\\Math-1.5B}} & Base & 56.35 & 17.00 & 25.22 & 3.33 & 37.81 & 27.94 \\
        & GRPO & 70.78 & 29.00 & 33.56 & 10.00 & 47.19 & 38.11\\
        & RAFT++ & 69.02 & 27.71 & 31.74 & 9.58 & 44.06 & 36.42\\
        \rowcolor{cyan!20} & \dibs-GRPO & \textbf{73.92} & \textbf{29.96} & \textbf{36.26} & \textbf{12.92} & 49.06 & \textbf{40.42}\\
        \rowcolor{cyan!20} & \dibs-RAFT++ & {72.90} & {29.04} & {36.20} & {9.17} & \textbf{51.88} & {39.64}\\
        
        \midrule
        \multirow{5}{*}{\makecell{Qwen2.5\\Math-7B}} & Base & 42.00 & 12.82 & 19.20 & 12.92 & 30.00 & 23.39 \\
        & GRPO & 81.20 & 36.03 & \textbf{44.15} & 20.83 & 63.12 & 49.07 \\
        & RAFT++ & \textbf{81.68} & {35.85} & {43.83} & 20.83 & 63.12 & 49.06 \\
        \rowcolor{cyan!20} & \dibs-GRPO & {81.55} & {36.26} & {43.56} & \textbf{22.92} & \textbf{65.00} & \textbf{49.86}\\
        \rowcolor{cyan!20} & \dibs-RAFT++ & 81.00 & \textbf{36.67} & 43.48 & \textbf{22.92} & 61.56 & 49.13\\
        
    \bottomrule
    \end{NiceTabular}
    \end{adjustbox}
    \label{tab:main res}
\end{table}


\begin{figure}[h]
    \centering
    \includegraphics[width=0.9\linewidth]{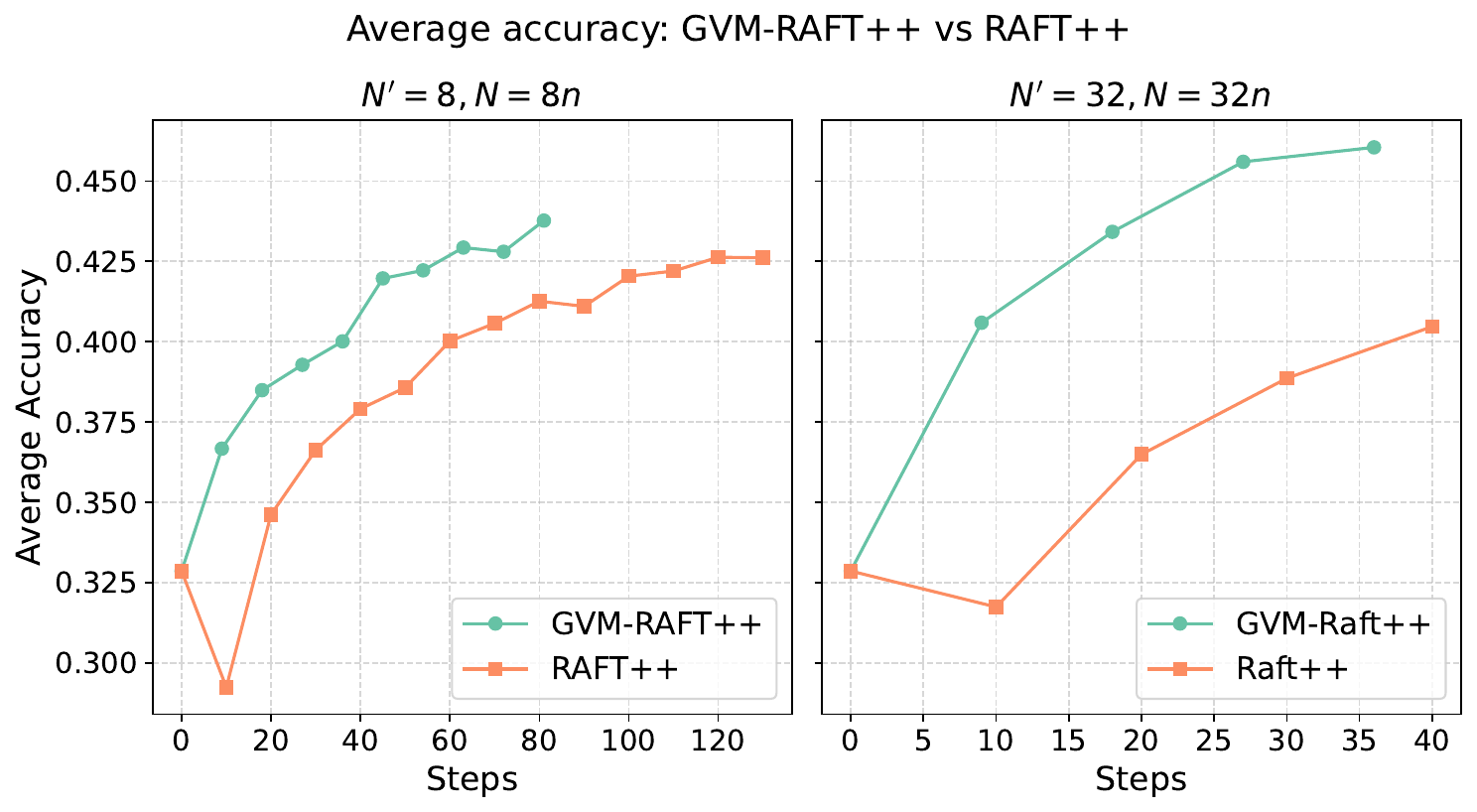}
    \caption{The average accuracy of RAFT++ and \dibs-RAFT++ with sample sizes 8 and 32, respectively, on Math500, Minerva Math, and Olympiad Bench, with base model Qwen2.5-Math-1.5B.}
    \label{fig:dibs raftpp vs raftpp}
\end{figure}

\paragraph{\dibs\ Improves Efficiency with Comparable Performance}
From Table \ref{tab:main res}, we could conclude that \dibs\ could indeed improve the final performance of both RAFT++ and GRPO when applied on Qwen2.5-Math-1.5B. For Qwen2.5-Math-7B, the final performance is comparable to baselines, under a faster convergence rate. This verifies the effectiveness of sample budget rebalancing using both the accept rates (standing for the hardness) and the Lipschitz coefficients (standing for the gradients) of different prompts.

In Figure \ref{fig:dibs raftpp vs raftpp}, we display the step-wise performance of both RAFT++ and \dibs-RAFT++ based on Qwen2.5-Math-1.5B with sample sizes per prompt of 8 and 32, respectively. From the results, we could infer that \dibs\ firstly could enhance the convergence rate evidently, with about $2\times$ speedup for $N^\prime=8, N=8n$ and $4\times$ speed up for $N^\prime=32,N=32n$ measured in update steps compared to its vanilla versions. Besides, upon convergence, \dibs-RAFT++ could achieve around $1.25\%$ and $5\%$ performance improvement for both configurations, respectively.

In addition, we observed that for Qwen2.5-Math-1.5B, when $N^\prime=8$, the training rewards of \dibs-RAFT++ are higher than RAFT++ itself, and when $N^\prime=16$, the training rewards of \dibs-RAFT++ are approximately the same as RAFT++, while when $N^\prime=32$, the training rewards are lower than that of RAFT++. This implies that using a larger $N^\prime$ for \dibs\ will bias the sample budget to harder prompts, which may be due to the fact that with a larger $N^\prime$, harder problems are more likely to be solved at least once compared to when $N^\prime$ is smaller.

\paragraph{\dibs\ Could Be Generalized to RL Algorithms}
Though our derivation and theoretical proofs are based on EM algorithm, the sampling strategy itself could be disentangled from the pipeline and utilized solely. Take the GRPO algorithm proposed in \cite{shao2024deepseekmath} as an example. Figure \ref{fig:dibs grpo raftpp} demonstrates that with the same sample budget rebalancing strategy as in RAFT++, \dibs-GRPO could achieve similar performance convergence to \dibs-RAFT++. This further verifies the usefulness of \dibs\ as a single sampling strategy compared to being deployed in the EM pipeline. \cite{zhong2025brite} makes extra assumptions (please refer to Example 3.5) on the reward structure, which enables their framework's natural generalization to RL from a theoretical perspective. The similar ideas shed light on the generalization of our method to RL algorithms, and we verified the empirical effectiveness of GVM in our experiments, indicating the success of budget reweighting through gradient variance minimization.

\subsection{Ablation Study}
In the section of the ablation study, we focus on several aspects that might affect the performance of \dibs, including the sample sizes $N^\prime$ and $N$, together with different policy update algorithms, etc.

\paragraph{\dibs Balances the Sampling Budget in a Stable Way} Figure \ref{fig:acc vs sample size} displays the performance trend concerning the training steps, with each training step corresponding to exactly $1024 \times 4$ samples, which means that the definition of ``step'' here may not be mapped to the actual training step. Take the case $N^\prime=16, N=16n$ for example, under this setting, each iteration is optimized over 1024 prompts, with a total sampling budget $1024\times 16=4\times (1024 \times 4)$, therefore corresponding to 4 steps in the figure. Though increasing the number of samples in the first stage, $N^\prime$, could make more prompts be accepted at least once, thus leading to more useful prompts during the training procedure, the convergence rate does not change evidently. This means \dibs could possibly estimate the difficulties of the prompts with a relatively small sampling budget to a sufficient threshold, and we could choose small $N^\prime$ and $N$ in realistic applications.
\begin{figure}[h]
    \centering
    \includegraphics[width=0.7\linewidth]{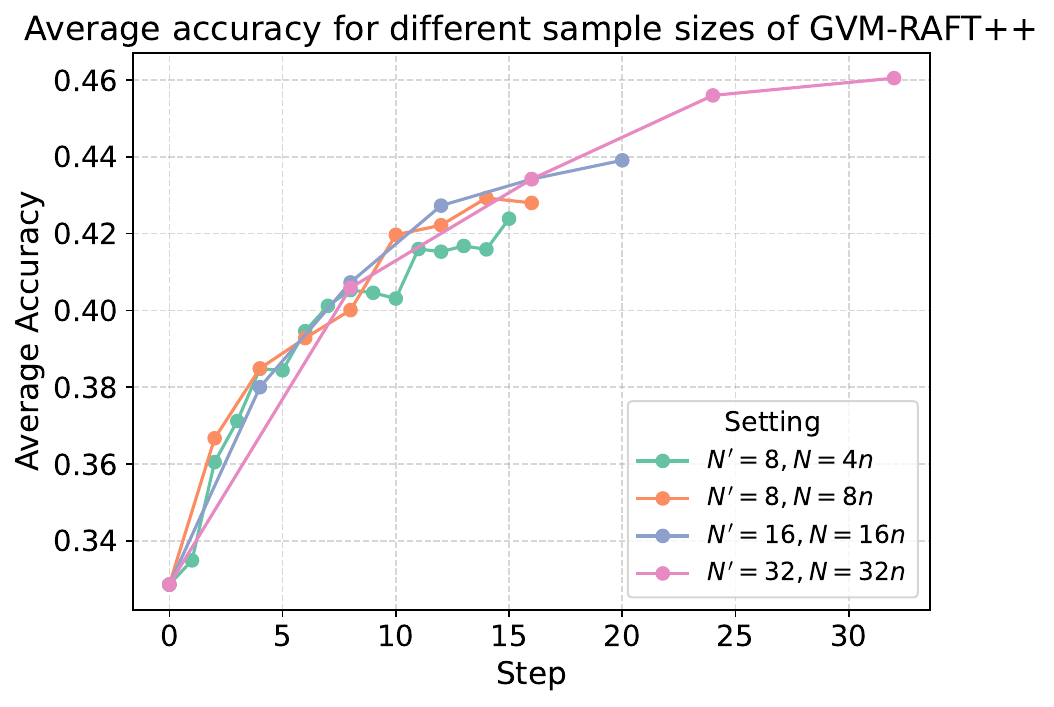}
    \caption{The average accuracy of \dibs-RAFT++ with different sample sizes for both stage one and stage two on Math500, Minerva Math and Olympiad Bench. According to the trend, though the number of sample sizes in both stage one and stage two increases, the convergence rate stays almost the same.}
    \label{fig:acc vs sample size}
\end{figure}


\begin{figure}[h]
    \centering
    \includegraphics[width=0.9\linewidth]{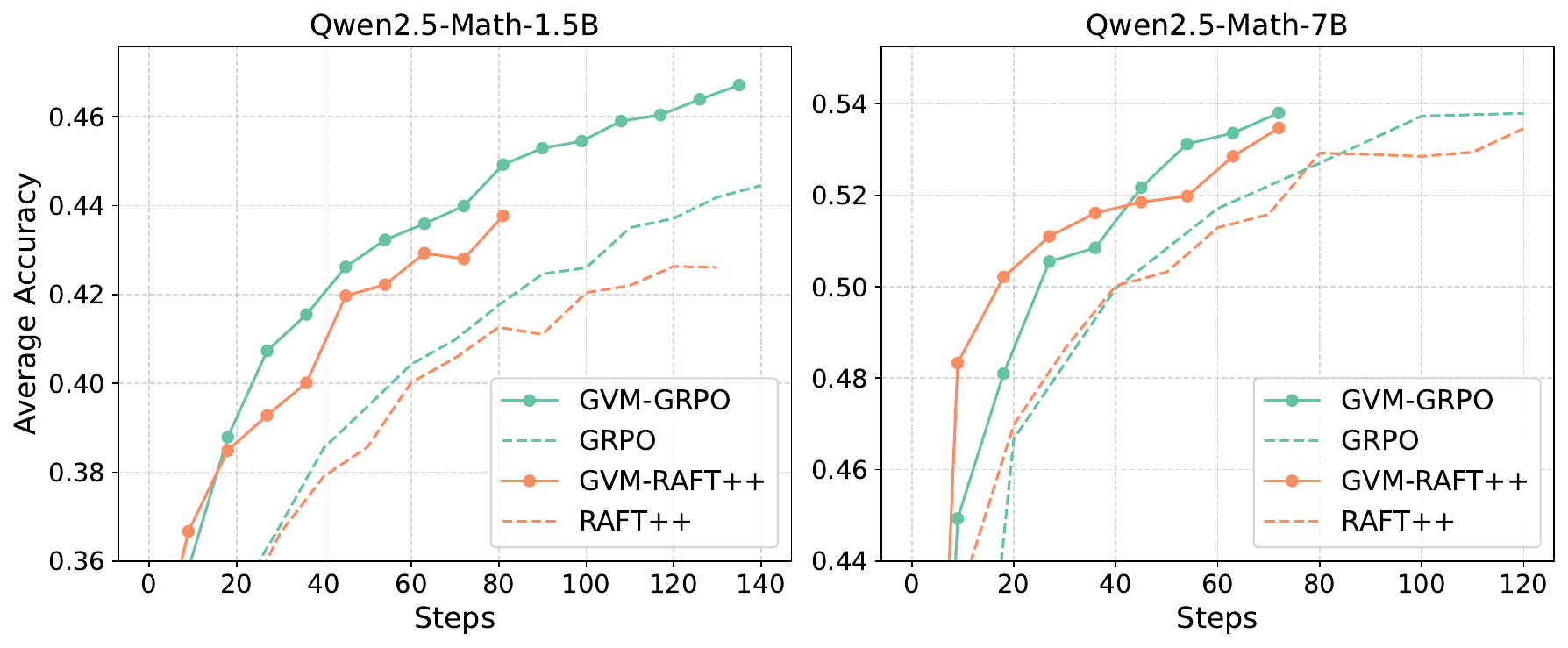}
    \caption{The average accuracy of RAFT++, \dibs-RAFT++ and GRPO, \dibs-GRPO with sample sizes $N^\prime=8, N=8n$ respectively on Math500, Minerva Math and Olympiad Bench. Applying the \dibs\ sample strategy to RL algorithms like GRPO achieves similar results to \dibs-RAFT++ compared to vanilla GRPO. Here for vanilla RAFT++ and GRPO, the rollout number per prompt is set to 8 as well. 
    }
    \label{fig:dibs grpo raftpp}
\end{figure}




\section{Conclusion and Discussion}
\label{sec:conclusion}
In summary, we propose a novel dynamic sampling and updating strategy - \dibs, which could assign prompt-specific sampling budget in a fine-grained manner. The algorithm consists of two stages: a pre-sampling phase that estimates the difficulty of each instance and computes prompt-specific Lipschitz coefficients, followed by an update phase that performs parameter optimization. Our experiments have demonstrated the effectiveness of \dibs, achieving faster convergence and even superior final performance under suitable settings compared to other baselines. \dibs\ could improve the convergence for both rejection sampling backed pipelines and then be generalized to RL algorithms like GRPO, which demonstrates the significant potential for adaptively reweighting the sampling and update budget. Finally, we also provide rigorous theoretical analysis and establish performance guarantees for this class of two-stage algorithms. 

The experiments are conducted with Qwen series base models, while the effectiveness of \dibs still awaits a broader verification on other base models. Besides, we believe \dibs could generalize to other RL algorithms like PPO and Reinforce, while more experiments need to be performed to support the hypothesis. These could serve as the directions for possible further explorations.

\bibliography{sample}
\bibliographystyle{apalike}

\appendix
\section{Author Contributions}
\label{app:contributions}
This work stems from all authors' valuable contributions and close collaborations.

\textbf{JY} implements the workflow based on the algorithm pipeline; conducts preliminary experiments and verifies the effectiveness of \dibs in the early stage; scales up the experiments; mainly writes the experiments-relevant sections of this paper.

\textbf{YH} initializes the project; contributes to the algorithm proposal and the clarification of the pipeline procedure; derives the theoretical analysis for \dibs and establishes corresponding bounds for the algorithm;  mainly writes the sections on methodology and theoretical analysis.

\textbf{HZ} devotes to the conduct of experiments; contributes to the ablation studies of the tradeoff between entropy loss and model performance; reviews related works on the techniques to mitigate fast entropy loss in the early stage of RL training and provides insightful suggestions on how to balance the tradeoff.

\textbf{HD} implements baselines and provides practical implementation suggestions; conducts a series of experiments, including baselines comparisons, \dibs iterations in both RAFT++ and RL (GRPO) algorithmic settings; provides discussions and suggestions about experiment design; contributes to writing.

\textbf{WX} provides insightful advice through the project timeline on theoretical analysis, experiment design, and paper writing; contributes to the initial codebase and baseline training; mainly writes the introduction, related works, and problem formulation sections; and revises the draft version of this paper.

\textbf{NJ} and \textbf{TZ} support the work and provide computational resources, guidance, and suggestions for theoretical analysis, experiment design and paper writing.

\section{Experiment Details}
\label{app:exp detail}
To understand how to choose the hyperparameters $\alpha$ and $\beta$, we first simplify the question and omit the impacts of gradients here. Then the sample weight of function \(f(x)=\frac{1}{\sqrt{x+\frac{\alpha}{x^{\beta-1}}}}\) with different choices of $\alpha$ and $\beta$ is visualized as the figure below. From Figure \ref{fig:sample_weight}, we could see that in order to assign more sample weights on problems with lower accept rates, we should choose both $\alpha$ and $\beta$ as low as possible within a reasonable region. Therefore, in our experiments, we fix $\alpha=0.001$ and $\beta=2.0$. For all the hyperparameters, please refer to Table \ref{tab:full hyperparams}.

\begin{table}[h]
    \centering
    \caption{Full hyperparameters.}
    \begin{adjustbox}{max width=\textwidth}
    \begin{tabular}{c|c}
    \toprule \midrule
         \textbf{Parameter} & \textbf{Value} \\
         \midrule
         $\alpha$ & $1e-3$ \\
         $\beta$ & $2$ \\
         batch size & 1024 \\
         mini batch size & 256 \\
         max prompt length & 1024 \\
         max response length & 3072 \\
         learning rate & $1e-6$ \\
         KL loss coefficient & 0.001 \\
    \bottomrule
    \end{tabular}
    \end{adjustbox}
    \label{tab:full hyperparams}
\end{table}

As for the choices of $\alpha$ and $\beta$ is \dibs\ stage two sample budget assignment, we visualize the effects of both parameters in Figure \ref{fig:sample_weight}. From the figure, we could see that when $\beta$ is fixed, the smaller $\alpha$ is, the more sample budget will be allocated to harder prompts. Similarly, when $\alpha$ is fixed, the smaller $\beta$ is, the more sample budget will be allocated to harder prompts.
\begin{figure}[h]
    \centering
    \includegraphics[width=0.7\linewidth]{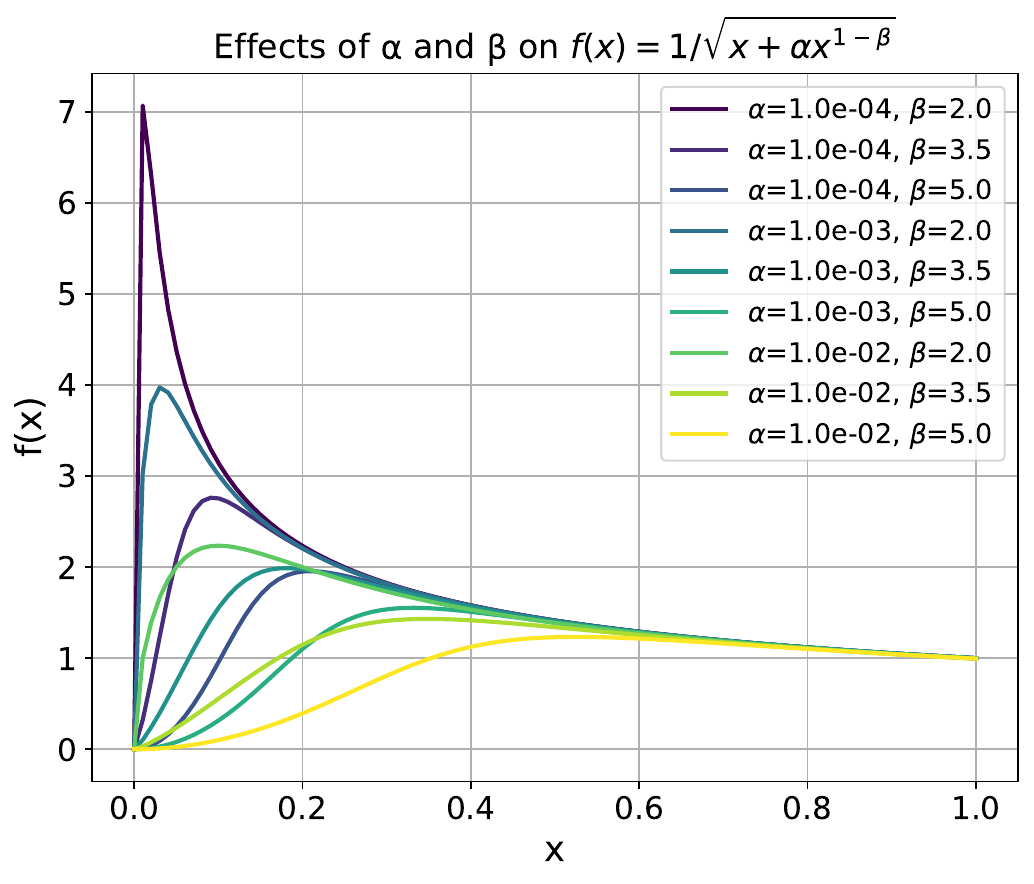}
    \caption{Visualization of different choices of $\alpha$ and $\beta$ for $f(x)=1/\sqrt{x+\alpha x^{1-\beta}}$.}
    \label{fig:sample_weight}
\end{figure}

\begin{figure}[h]
    \centering
    \includegraphics[width=0.95\linewidth]{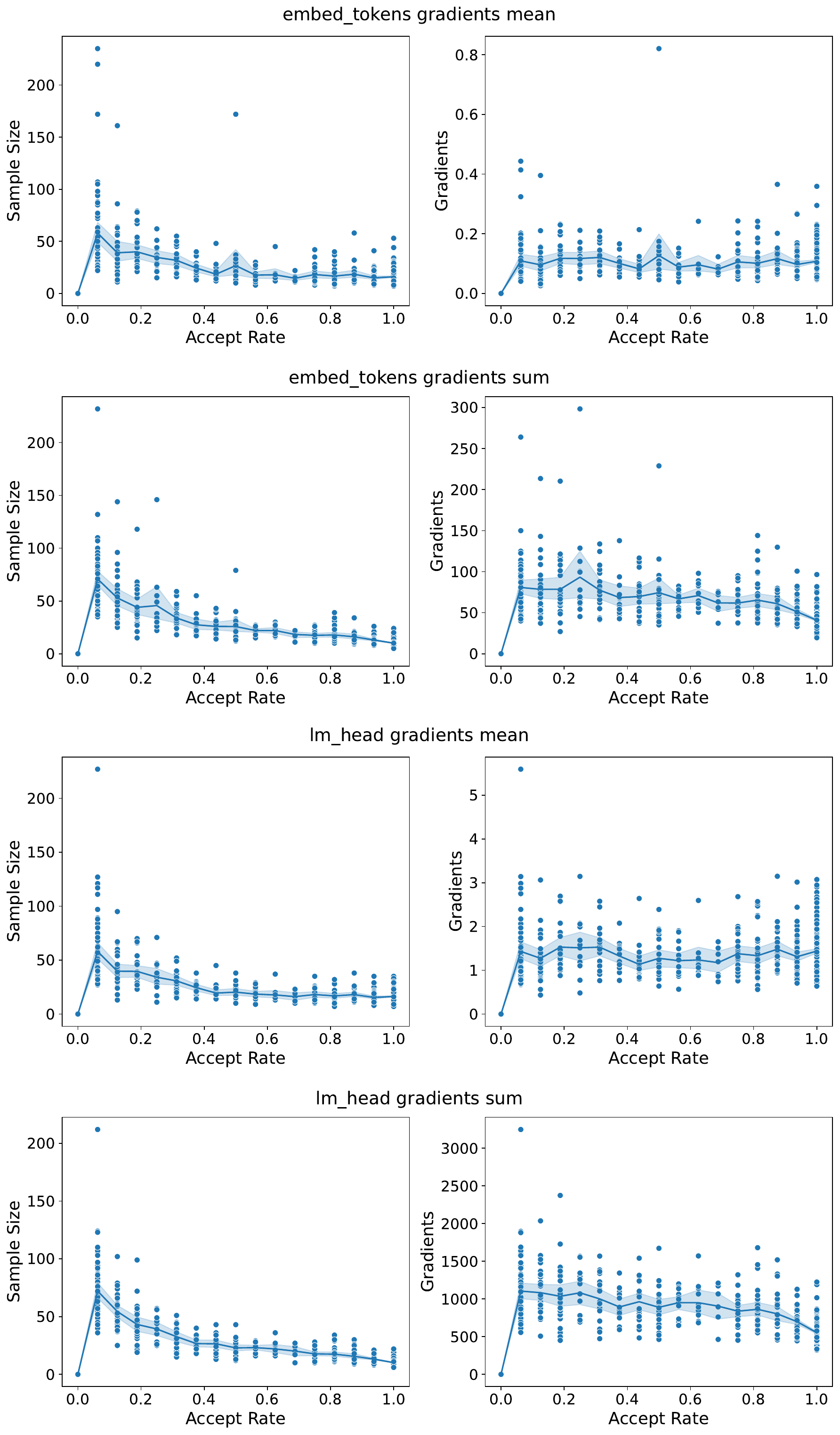}
    \caption{Comparison among different choices of the layers for gradient calculation and the methods of gradients reduction on Qwen2.5-Math-7B.}
    \label{fig:grad compare}
\end{figure}

To analyze the impacts of which layers are used in the gradients calculation and the reduction methods of gradients, we conduct a series of experiments using Qwen2-math-7B-Instruct \citep{yang2024qwen2}. For the layers from which we retrieve the gradients, we choose the top one, \texttt{lm\_head}, and the bottom one, \texttt{embed\_tokens}, as the activated layers for loss backpropagation and gradients calculation. For gradients reduction, we compare both summation and average by sequence length. From the results shown in Figure \ref{fig:grad compare}, we could see that different kinds of layers for gradient calculation do not make a huge difference on the final sample sizes, while the summation of gradients biases the sample sizes to problems with lower accept rates, which we speculate the correct answers for problems with lower accept rates could steer the model more than easier problems. Therefore, in our experiments, we choose summation as the method for gradients reduction.
According to the relationship between the sample sizes and accept rates. The relationship between gradients and accept rates, it could be inferred that in a high level, the gradients are larger for prompts with lower accept rates if we use the average reduction. At the same time, it is not always the case, as prompts with higher accept rates may also have large gradients.

In the second stage of \dibs, to make the total sample budget exactly $N$, we sort the prompts according to their accept rates in descending order, and add more samples to the prompts with lower accept rates. In our experiments, for \dibs, each iteration consists of nine steps of update as the amount of training data per iteration is slightly less than $10\times1024=10240$, which is the amount of data needed for ten steps of update with train batch size $1024$.

For the implementation of \dibs-GRPO, we divide the sample budget calculated in the second stage by a fixed size (for example, 4 in our experiments), and copy the prompts multiple times by $n_i/4$. Then we set the rollout number to $4$ during the training procedure.

For compute resources, we mainly conduct the experiments on NVIDIA RTX A6000 and H100 GPUs, and each iteration in \dibs typically takes 90 minutes with sample sizes $N^\prime=8, N=8n$ on a 4$\times$H100 GPU server.

\section{Notation Table}
All of the notations are listed in Table~\ref{tab:notation}.

\begin{table}[h]
\centering 
\caption{The table of notations used in this paper.}
\begin{tabular}{c|c}
\toprule\midrule
\textbf{Notation} & \textbf{Description} \\
\midrule
$x, y, z$ & Prompt, CoT rationale, and predicted answer \\
$\mathcal{X}, \mathcal{Y}, \mathcal{Z}$ & the space of prompt, CoT rationale, and predicted answer\\
$m$ & the number of training samples $( x_i, z_i )$ \\
$Q_i^t(y)$ & the posterior distribution $\mathbb{P}(y | x_i, z_i, \theta)$ on the $t$-th E-step\\
$\mathcal{D}_i^t$ & the collected sample set for $(x_i, z_i, y)$ on the $t$-th E-step \\
$n_i^t$ & the size of scheduling budget for prompt $x_i$ on the $t$-th E-step \\
$p_i^t$ & the average accept rate for prompt $x_i$ on the $t$-th E-step \\
$N$ & the whole size of sampling budget \\
$(\alpha, \beta)$ & tuning parameters in budget scheduling \\
$\mathcal{B}_i^r$ & the sample batch corresponding to $x_i$ on the $r$-th M-step\\
$b^i_r$ & the batch size of $\mathcal{B}_i^r$\\
$n_i^t$ & required number of samples for prompt $x_i$ on the $t$-th E-step \\
$k$ & the number of M-step after each E-step\\
$\theta_{kt-k + r}$ & the model parameter on the $r$-th M-step within the $t$-th E-step \\
$\eta$ & learning rate \\
$\mathcal{L}_t(\theta)$ & upper bound loss function on the $t$-th E-step \\
\bottomrule
\end{tabular}
\label{tab:notation}
\end{table}

\section{More Theoretical Results and Missing Proofs} \label{sec:proof}
If we consider further strongly convex loss function, there will be:
\begin{theorem}[Decreasing rate with smooth and strongly convex condition.]\label{thm:stro_conv}
Suppose $- \ln \mathbb{P}(y,z | x, \theta)$ is $1 / \gamma$-smooth and $H$-strongly convex with respect to $\theta$. If $0 < \eta \le \min\{ \gamma/2, 1/(4 k H) \}$, then the proposed algorithm satisfies that
\begin{align*}
&  \E \left[ \mathcal{L}(\theta_{kT}) - \mathcal{L} (\theta^*) \right] - \E \left[ \mathcal{L}(\theta_0) - \mathcal{L} (\theta^*) \right]  \le -  \frac{ H \eta^2}{2 k } \Delta_3(k,T) -  \frac{\eta}{ 4 k} \Delta_1(k, T)  +   \frac{\eta}{8 k}  \Omega(k, T), 
\end{align*}
where 
\begin{equation*}
   \Delta_3(k, T) : =\sum_{t=1}^T  \sum_{r=1}^{k} \E \| \sum_{m=0}^{r - 1} \nabla \mathcal{L}_t (\theta_{kt-k + m}) \|^2 > 0 .
\end{equation*}
\end{theorem}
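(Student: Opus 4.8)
The plan is to follow the same two-level structure behind Theorems~\ref{thm:smo} and \ref{thm:conv}: an outer telescoping across E-steps driven by the EM monotonicity, and an inner stochastic-gradient analysis on the fixed surrogate within each E-step, now sharpened by strong convexity. Note first that on the left-hand side the term $\mathcal{L}(\theta^*)$ cancels, so it suffices to bound the total decrease $\E[\mathcal{L}(\theta_{kT})] - \E[\mathcal{L}(\theta_0)]$.

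For the outer reduction I would exploit that $Q^t$ is set to the exact posterior at $\theta_{kt-k}$, so the ELBO is tight there: $\mathcal{L}$ and $\mathcal{L}_t$ agree at $\theta_{kt-k}$ up to the parameter-independent entropy constant $C_t = \E_x \mathcal{H}(Q^t)$, while $\mathcal{L}(\theta) \le \mathcal{L}_t(\theta) + C_t$ holds for all $\theta$. Subtracting the common constant gives $\mathcal{L}(\theta_{kt}) - \mathcal{L}(\theta_{kt-k}) \le \mathcal{L}_t(\theta_{kt}) - \mathcal{L}_t(\theta_{kt-k})$, and telescoping over $t=1,\dots,T$ reduces the problem to bounding $\sum_t \E[\mathcal{L}_t(\theta_{kt}) - \mathcal{L}_t(\theta_{kt-k})]$, the surrogate decrease accumulated over the $k$ inner M-steps.

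Next, within a single E-step I would analyze the steps $\theta_{kt-k+r+1} = \theta_{kt-k+r} - \eta\, g_{kt-k+r}$ on the fixed surrogate $\mathcal{L}_t$, which inherits $1/\gamma$-smoothness and $H$-strong convexity as an expectation of such functions. Summing the per-step smoothness descent lemma, taking conditional expectations so that $\E[g_{kt-k+r}\mid\mathcal{F}_{kt-k+r}] = \nabla\mathcal{L}_t(\theta_{kt-k+r})$, and splitting $\E\|g\|^2 = \|\nabla\mathcal{L}_t\|^2 + V(g)$ reproduces the baseline $-\tfrac{\eta}{2}\Delta_1 + \tfrac{\eta^2}{2\gamma}\Omega$ skeleton of Theorem~\ref{thm:smo}. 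To extract the new negative term I would then invoke strong convexity. Writing the displacement from the start of the E-step as $\theta_{kt-k+r} - \theta_{kt-k} = -\eta\sum_{m=0}^{r-1} g_{kt-k+m}$, the quadratic term $\tfrac{H}{2}\|\theta_{kt-k+r}-\theta_{kt-k}\|^2$ contributes, in expectation and after peeling off the noise, a multiple of $\|\sum_{m=0}^{r-1}\nabla\mathcal{L}_t(\theta_{kt-k+m})\|^2$; summing over $r=1,\dots,k$ and $t$ assembles exactly $\Delta_3(k,T)$, while the same strong-convexity slack tightens the variance coefficient from $\tfrac{\eta}{4k}$ (convex case) to $\tfrac{\eta}{8k}$.

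Finally I would fix the step size to meet both constraints: $\eta\le\gamma/2$ keeps the smoothness-induced contribution $-\eta(1-\tfrac{\eta}{2\gamma})\|\nabla\mathcal{L}_t\|^2$ negative with the stated constant, exactly as in Theorem~\ref{thm:conv}, and $\eta\le 1/(4kH)$ ensures that the cross terms arising when the $H$-quadratic meets the accumulated stochastic displacement over all $k$ inner steps stay dominated, so $\Delta_3$ enters with the correct sign and the inequality closes after collecting $\Delta_1$, $\Delta_3$, and $\Omega$. I expect the main obstacle to be precisely this coupling between inner-loop noise and the strong-convexity term: because $\theta_{kt-k+r}$ depends on the entire noise history $g_{kt-k},\dots,g_{kt-k+r-1}$, the squared partial displacement $\|\sum_m g_{kt-k+m}\|^2$ entangles the deterministic partial-gradient sum $\|\sum_m \nabla\mathcal{L}_t\|^2$ with accumulated variance. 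Disentangling these via the filtration — taking conditional expectations in the correct order — and absorbing the residual cross terms through the $1/(4kH)$ bound is where the genuine bookkeeping lies, with the convexity-based partial-sum manipulations playing the same structural role as in the proof of Theorem~\ref{thm:conv}.
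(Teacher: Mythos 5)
Your proposal is correct and follows essentially the same route as the paper's proof: an outer telescoping step using the tightness of the ELBO at $\theta_{kt-k}$, an inner per-E-step combination of the smoothness descent inequality with the strong-convexity lower bound (the paper's nonnegative quantity $\delta_{kt-k+r}$ is exactly the strong-convexity gap you describe), and the expansion of $\|\theta_{kt-k+r}-\theta_{kt-k}\|^2=\eta^2\|\sum_{m<r}g_{kt-k+m}\|^2$ into the partial-gradient sums that assemble $\Delta_3(k,T)$ plus accumulated variance, with $\eta\le\gamma/2$ and $\eta\le 1/(4kH)$ playing the roles you assign them. You also correctly locate the main bookkeeping burden in disentangling the noise history from the deterministic partial-gradient sums via conditional expectations, which is precisely where the paper's derivation does its work.
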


\begin{proof}[Proof of Proposition~\ref{prop:opt}]
Based on the objective function in \eqref{eq:sche_obj}, we can define the Lagrange objective function as
\begin{equation*}
    \mathcal{G}(\{n_i^t\}, \lambda) := \sum_{i=1}^m  \frac{1}{1 + \alpha / (p_i^t)^\beta}  \frac{G_i^2}{p_i^t n_i^t} - \lambda (\sum_{i=1}^m n_i^t - N ).
\end{equation*}
Taking derivative on the variables, we could obtain that
\begin{align*}
& \frac{\partial \mathcal{G}}{\partial n_i^t} =  -  \frac{2}{1 + \alpha / (p_i^t)^\beta}  \frac{G_i^2}{p_i^t (n_i^t)^2} - \lambda n_i^t, \quad i \in [m],\\
& \frac{\partial \mathcal{G}}{\partial \lambda} = N - \sum_{i=1}^m n_i^t.
\end{align*}
To achieve the quations as
\begin{equation*}
   \frac{\partial \mathcal{G}}{\partial n_i^t} = 0, \quad \frac{\partial \mathcal{G}}{\partial \lambda} = 0,
\end{equation*}
we can obtain the result as
\begin{equation*}
   n_i^t = N \cdot \frac{G_i / \sqrt{p_i^t + \frac{\alpha}{(p_i^t)^{\beta - 1}} }}{\sum_{l=1}^n G_l / \sqrt{p_l^t + \frac{\alpha}{(p_l^t)^{\beta - 1}} } } \propto \frac{G_i}{\sqrt{ p_i^t + \frac{\alpha}{(p_i^t)^{\beta - 1}}} }, \quad \forall i \in [m],
\end{equation*}
which finishes the proof.

\end{proof}

\begin{proof}[Proof for Theorem~\ref{thm:smo}]
To simplify the analysis, we denote
\begin{equation*}
    g_{kt-k+r} = - \frac{1}{m} \sum_{i=1}^m \frac{1}{b_i^{kt-k+r}} \sum_{y_j \in \mathcal{B}_i^{kt-k+r}} \nabla_\theta \left( \ln \mathbb{P}(y_j, z_i | x_i, \theta_{kt-k+r})   \right).
\end{equation*}
With the fact that $- \ln \mathbb{P}(y, z | x, \theta_t)$ are $1/ \gamma$-smooth, we can obtain 
\begin{equation*}
    \E\left[ \mathcal{L}_t(\theta_{kt - k + r + 1 }) - \mathcal{L}_t (\theta_{kt - k + r  }) \right] \le - \eta \E \| \nabla_\theta \mathcal{L}_t (\theta_{kt - k + r  })\|^2 + \frac{\eta^2}{2 \gamma} \E \| g_{kt-k+r} \|^2,
\end{equation*}
for any $r = 0 ,1, \dots, k-1$ on iter-$t$. Summing the above inequality over all $r$, and with the fact that
\begin{equation*}
   \E \mathcal{L}(\theta_{kt}) \le \E \mathcal{L}_t(\theta_{kt}) \le  \E \mathcal{L}_t (\theta_{kt-k}) = \E \mathcal{L}(\theta_{kt-k}),
\end{equation*}
we have
\begin{align*}
    \E \left[ \mathcal{L}(\theta_{kt}) - \mathcal{L} (\theta_{kt-k}) \right] &\le - \eta \sum_{r=0}^{k - 1} \E \|\nabla_\theta \mathcal{L}_t(\theta_{kt-k+r })\|^2 + \sum_{r=0}^{k - 1} \frac{\eta^2}{2 \gamma} \E \| g_{kt-k+r} \|^2 \\
    &= - \eta \sum_{r=0}^{k - 1} \E \|\nabla_\theta \mathcal{L}_t(\theta_{kt-k+r })\|^2 + \sum_{r=0}^{k - 1}  \frac{\eta^2}{2 \gamma} \left( \E V \left( g_{kt-k+r} \right) + \E \|\nabla_\theta \mathcal{L}_t(\theta_{kt-k+r })\|^2 \right) \\
    &\le - \frac{\eta}{2} \sum_{t=0}^{k - 1} \E \|\nabla_\theta \mathcal{L}_t(\theta_{kt-k+r })\|^2 + \sum_{r=0}^{k - 1} \frac{\eta^2}{2 \gamma} \E V \left( g_{kt-k+r} \right),
\end{align*}
where the last inequality is due to $\eta / \gamma \le 1$. Further summing $t$ over $1, \dots, T$, we can obtain that
\begin{equation*}
   \E \left[ \mathcal{L}(\theta_{kT}) - \mathcal{L} (\theta_0) \right] \le - \frac{\eta}{2} \sum_{t=1}^T  \sum_{r=0}^{k - 1} \E \|\nabla_\theta \mathcal{L}_t(\theta_{kt-k+r })\|^2 + \sum_{t=1}^T \sum_{r=0}^{k - 1} \frac{\eta^2}{2 \gamma} \E V \left( g_{kt-k+r} \right).
\end{equation*}
\end{proof}

\begin{proof}[Proof for Theorem~\ref{thm:conv}]
The proof techniques are mainly from \citet{zhao2014accelerating}. To be specific,
for each iteration $t$, we denote that
\begin{align*}
 \delta_{kt-k+r} &= \left\langle - \frac{1}{m} \sum_{i=1}^m \frac{1}{b_i^{kt-k+r}} \sum_{y_j \in \mathcal{B}_i^{kt-k+r}} \nabla_\theta \left( \ln \mathbb{P}(y_j, z_i | x_i, \theta_{kt-k+r})   \right), \theta_{kt-k+r} - \theta_{kt-k} \right\rangle\\
 & \quad + \frac{1}{m} \sum_{i=1}^m \frac{1}{b_i^{kt-k+r}} \sum_{y_j \in \mathcal{B}_i^{kt-k+r}}  \left( \ln \mathbb{P}(y_j, z_i | x_i, \theta_{kt-k+r}) - \ln \mathbb{P}(y_j, z_i | x_i, \theta_{kt-k})  \right),
 \end{align*}
which implies that
\begin{equation*}
    \E \delta_{kt-k+r} =  \E \langle \nabla \mathcal{L}_t (\theta_{kt-k+r}), \theta_{kt-k+r} - \theta_{kt-k} \rangle - \E \mathcal{L}_t(\theta_{kt-k+r}) + \E \mathcal{L}_t (\theta_{kt-k}) \ge 0.
\end{equation*}
Based on these notations, we can obtain that
\begin{align*}
 & \| \theta_{kt - k + r} - \theta_{kt-k} \|^2 - \| \theta_{kt-k+r+1} - \theta_{kt-k} \|^2  = 2 \langle \eta g_{kt-k + r}, \theta_{kt-k+r} - \theta_{kt-k} \rangle - \| \eta g_{kt-k+r} \|^2\\
 &= 2 \eta \delta_{kt-k+r} - \frac{2 \eta}{m} \sum_{i=1}^m \frac{1}{b_i^{kt-k+r}} \sum_{y_j \in \mathcal{B}_i^{kt-k+r}}  \left( \ln \mathbb{P}(y_j, z_i | x_i, \theta_{kt-k + r}) - \ln \mathbb{P}(y_j, z_i | x_i, \theta_{kt-k})  \right) - \| \eta g_{kt-k + r} \|^2,
\end{align*}
taking expectation of both sides, we have
\begin{equation*}
\E \| \theta_{kt-k+ r} - \theta_{kt-k} \|^2 - \E \| \theta_{kt-k + r + 1} - \theta_{kt-k} \|^2 \ge 2 \eta \left( \E \mathcal{L}_t(\theta_{kt-k + r}) - \E \mathcal{L}_t(\theta_{kt-k})  \right) - \eta^2 \E \|  g_{kt-k + r} \|^2.
\end{equation*}
Summing over $r = 0,1, \dots, k -1$, we have
\begin{equation*}
 \sum_{r=0}^{k - 1} \E \left(  \mathcal{L}_t(\theta_{kt-k + r}) - \mathcal{L}_t(\theta_{kt-k})  \right) \le - \frac{1}{2 \eta} \E \| \theta_{kt} - \theta_{kt-k} \|^2 + \frac{\eta}{2} \sum_{r=0}^{k -1} \E \|  g_{kt-k + r} \|^2.
\end{equation*}
Also, considering the fact obtained in previous analysis as:
\begin{equation*}
  \sum_{r=0}^{k - 1} \E \left[ \mathcal{L}_t(\theta_{kt-k + r + 1}) - \mathcal{L}_t (\theta_{kt-k + r}) \right] \le - \eta \sum_{r=0}^{k - 1} \E \|\nabla_\theta \mathcal{L}_t(\theta_{kt-k + r })\|^2 + \sum_{r=0}^{k - 1} \frac{\eta^2}{2 \gamma} \E \| g_{kt-k + r} \|^2,
\end{equation*}
we can combine the two inequalities above, and obtain that
\begin{align*}
& \quad \E \left[ \mathcal{L}(\theta_{kt}) - \mathcal{L} (\theta_{kt-k}) \right] \le \E \left[ \mathcal{L}_t(\theta_{kt}) - \mathcal{L}_t (\theta_{kt-k}) \right] \le  \frac{1}{k}  \sum_{r=0}^{k - 1} \E \left[ \mathcal{L}_t(\theta_{kt-k + r + 1}) - \mathcal{L}_t (\theta_{kt-k}) \right]\\
&\le - \frac{1}{2 \eta k} \E \| \theta_{kt} - \theta_{kt-k} \|^2 - \frac{\eta}{k} \sum_{r=0}^{k - 1} \E \|\nabla_\theta \mathcal{L}_t(\theta_{kt-k + r })\|^2  + \frac{1}{k} \left( \frac{\eta}{2} + \frac{\eta^2}{2 \gamma} \right) \sum_{r=0}^{k -1} \E \|  g_{kt-k + r} \|^2 \\
&= - \frac{\eta}{2 k } \E \| \sum_{r=0}^{k - 1} \nabla \mathcal{L}_t (\theta_{kt-k + r}) \|^2 - \frac{1}{k} \left( \frac{\eta}{2} - \frac{\eta^2}{2 \gamma} \right) \sum_{r=0}^{k - 1} \E \|\nabla_\theta \mathcal{L}_t(\theta_{kt-k + r })\|^2  +  \frac{\eta^2}{2 \gamma k}  \sum_{r=0}^{k -1} \E V \left(  g_{kt-k + r} \right)^2 \\
&\le - \frac{\eta}{2 k } \E \| \sum_{r=0}^{k - 1} \nabla \mathcal{L}_t (\theta_{kt-k + r}) \|^2 - \frac{\eta}{ 4 k}  \sum_{r=0}^{k - 1} \E \|\nabla_\theta \mathcal{L}_t(\theta_{kt-k + r })\|^2  +  \frac{\eta}{4 k}  \sum_{r=0}^{k -1} \E V \left(  g_{kt-k + r} \right)^2,
\end{align*}
where the last inequality is from $\eta / \gamma \le 1/2$.

Summing over $t = 1, \dots, T$, we have
\begin{align*}
 \E \left[ \mathcal{L}(\theta_{kT}) - \mathcal{L} (\theta_0) \right] &\le - \sum_{t=1}^T \frac{\eta}{2 k } \E \| \sum_{r=0}^{k - 1} \nabla \mathcal{L}_t (\theta_{kt-k + r}) \|^2 - \sum_{t=1}^T \frac{\eta}{ 4 k}  \sum_{r=0}^{k - 1} \E \|\nabla_\theta \mathcal{L}_t(\theta_{kt-k + r })\|^2 \\
 &\quad +  \sum_{t=1}^T \frac{\eta}{4 k}  \sum_{r=0}^{k -1} \E V \left(  g_{kt-k + r} \right)^2.
\end{align*}
\end{proof}

\begin{proof}[Proof for Theorem~\ref{thm:stro_conv}]
For each iteration $t$, we denote that
\begin{align*}
  \delta_{kt-k+r} =& \left\langle - \frac{1}{m} \sum_{i=1}^m \frac{1}{b_i^{kt-k+r}} \sum_{y_j \in \mathcal{B}_i^{kt-k+r}} \nabla_\theta \left( \ln \mathbb{P}(y_j, z_i | x_i, \theta_{kt-k+r})   \right), \theta_t - \theta_{kt-k} \right\rangle \\
  & + \frac{1}{m} \sum_{i=1}^m \frac{1}{b_i^{kt-k+r}} \sum_{y_j \in \mathcal{B}_i^{kt-k+r}}  \left( \ln \mathbb{P}(y_j, z_i | x_i, \theta_{kt-k+r}) - \ln \mathbb{P}(y_j, z_i | x_i, \theta_{kt-k})  \right) \\
  & - \frac{H}{2} \| \theta_t - \theta_{kt-k  } \|_2^2,
\end{align*}
which implies that
\begin{equation*}
    \E \delta_{kt-k+r} = \langle \nabla \mathcal{L}_t (\theta_{kt-k+r}), \theta_{kt-k+r} - \theta_{kt-k} \rangle - \mathcal{L}_t(\theta_{kt-k+r}) + \mathcal{L}_t (\theta_{kt-k}) - \frac{H}{2} \| \theta_{kt-k+r} - \theta_{kt-k} \|_2^2 \ge 0.
\end{equation*}
Based on these notations, we can obtain that
\begin{align*}
 & \quad \| \theta_{kt-k + r} - \theta_{kt-k} \|^2 - \| \theta_{kt-k + r + 1} - \theta_{kt-k} \|^2 \\
  &= 2 \langle \eta_s g_{kt-k + r}, \theta_{kt-k + r} - \theta_{kt-k} \rangle - \| \eta_s g_{kt-k + r} \|^2\\
 &= 2 \eta \delta_{kt-k+ r} - \frac{2 \eta}{m} \left[ \sum_{i=1}^m \frac{1}{b_i^{kt-k+r}} \sum_{y_j \in \mathcal{B}_i^{kt-k+r}}  \left( \ln \mathbb{P}(y_j, z_i | x_i, \theta_{kt-k + r}) - \ln \mathbb{P}(y_j, z_i | x_i, \theta_{kt-k})  \right) + \frac{H}{2} \| \theta_{kt-k + r} - \theta_{kt-k} \|^2  \right] \\
 &\quad - \| \eta g_{kt-k + r} \|^2,
\end{align*}
taking expectation of both sides, we have
\begin{align*}
\E \| \theta_{kt-k + r} - \theta_{kt-k} \|^2 - \E \| \theta_{kt-k + r + 1} - \theta_{kt-k} \|^2 \ge &2 \eta \left( \E \mathcal{L}_t(\theta_{kt-k + r}) - \mathcal{L}_t(\theta_{kt-k}) + \frac{H}{2} \| \theta_{kt-k + r} - \theta_{kt-k} \|^2  \right)\\
&- \eta^2 \E \|  g_{kt-k + r} \|^2.
\end{align*}
Summing over $r = 0,1, \dots, k -1$, we have
\begin{equation*}
 \sum_{r=0}^{k - 1} \E \left(  \mathcal{L}_t(\theta_{kt-k + r}) - \mathcal{L}_t(\theta_{kt-k})  \right) \le - \frac{1}{2 \eta} \E \| \theta_{kt-k} - \theta_{kt-k} \|^2 - \frac{H}{2} \sum_{r=0}^{k - 1} \E  \| \theta_{kt-k + r} - \theta_{kt-k} \|^2 + \frac{\eta}{2} \sum_{r=0}^{k -1} \E \|  g_{kt-k + r} \|^2.
\end{equation*}
Also, considering the fact obtained in previous analysis as:
\begin{equation*}
  \sum_{r=0}^{k - 1} \E \left[ \mathcal{L}_t(\theta_{kt-k + r + 1}) - \mathcal{L}_t (\theta_{kt-k + r}) \right] \le - \eta \sum_{r=0}^{k - 1} \E \|\nabla_\theta \mathcal{L}_t(\theta_{kt-k + r })\|^2 + \sum_{r=0}^{k - 1} \frac{\eta^2}{2 \gamma} \E \| g_{kt-k + r} \|^2,
\end{equation*}
we can combine the two inequalities above, and obtain that
\begin{align*}
& \quad \E \left[ \mathcal{L}(\theta_{kt}) - \mathcal{L} (\theta_{kt-k}) \right] \le \E \left[ \mathcal{L}_t(\theta_{kt}) - \mathcal{L}_t (\theta_{kt-k}) \right] \le  \frac{1}{k}  \sum_{r=0}^{k - 1} \E \left[ \mathcal{L}_t(\theta_{kt-k + r + 1}) - \mathcal{L}_t (\theta_{kt - k}) \right]\\
&\le - \frac{1}{2 \eta k} \E \| \theta_{kt} - \theta_{kt-k} \|^2 -\frac{H}{2 k} \sum_{r=0}^{k - 1} \E  \| \theta_{kt-k + r} - \theta_{kt-k} \|^2 - \frac{\eta}{k} \sum_{r=0}^{k - 1} \E \|\nabla_\theta \mathcal{L}_t(\theta_{kt-k + r })\|^2  \\
&\quad + \frac{1}{k} \left( \frac{\eta}{2} + \frac{\eta^2}{2 \gamma} \right) \sum_{r=0}^{k -1} \E \|  g_{kt-k + r} \|^2 \\
&= - \frac{\eta}{2 k } \E \| \sum_{r=0}^{k - 1} \nabla \mathcal{L}_t (\theta_{kt-k + r}) \|^2 - \frac{1}{k} \left( \frac{\eta}{2} - \frac{\eta^2}{2 \gamma} \right) \sum_{r=0}^{k - 1} \E \|\nabla_\theta \mathcal{L}_t(\theta_{kt-k + r })\|^2  +  \frac{\eta^2}{2 \gamma k}  \sum_{r=0}^{k -1} \E V \left(  g_{kt-k + r} \right)  \\
& \quad - \frac{ H \eta^2}{2 k } \sum_{r=1}^{k} \E \| \sum_{m=0}^{r - 1} \nabla \mathcal{L}_t (\theta_{kt-k + m}) \|^2  - \frac{ H \eta^2}{2 k } \sum_{r=1}^{k} \sum_{m=0}^{r-1} \E \mathbb{V} (g_{kt-k + m}) \\
&\le - \frac{\eta}{2 k } \E \| \sum_{r=0}^{k - 1} \nabla \mathcal{L}_t (\theta_{kt-k + r}) \|^2 - \frac{ H \eta^2}{2 k } \sum_{r=1}^{k} \E \| \sum_{m=0}^{r - 1} \nabla \mathcal{L}_t (\theta_{kt-k + m}) \|^2 - \frac{\eta}{ 4 k}  \sum_{r=0}^{k - 1} \E \|\nabla_\theta \mathcal{L}_t(\theta_{kt-k + r })\|^2 \\
&\quad +  \frac{\eta}{8 k}  \sum_{r=0}^{k -1} \E V \left(  g_{kt-k + r} \right),
\end{align*}
where the last inequality is from $\eta / \gamma \le 1/2$.

Summing over $t = 1, \dots, T$, we have
\begin{align*}
& \quad \E \left[ \mathcal{L}(\theta_{kT}) - \mathcal{L} (\theta^*) \right] \\
&\le \E \left[ \mathcal{L}(\theta_0) - \mathcal{L} (\theta^*) \right]  - \sum_{t=1}^T \frac{ H \eta^2}{2 k } \sum_{r=1}^{k} \E \| \sum_{m=0}^{r - 1} \nabla \mathcal{L}_t (\theta_{kt-k + m}) \|^2 - \sum_{t=1}^T \frac{\eta}{ 4 k}  \sum_{r=0}^{k - 1} \E \|\nabla_\theta \mathcal{L}_t(\theta_{kt-k + r })\|^2 \\
& \quad +  \sum_{t=1}^T \frac{\eta}{8 k}  \sum_{r=0}^{k -1} \E V \left(  g_{kt-k + r} \right).  
\end{align*}
\end{proof}

\section{Technical Lemmas}
\label{sec:technical lemmas}

\begin{lemma}[Jensen's Inequality] \label{lem:jesen}
Suppose that $\phi(w)$ is a convex function on $\Omega$. Consider $w_1,\cdots,w_m \in \Omega$, and non-negative numbers $\alpha_1,\cdots, \alpha_m \in \RR$ so that $\sum_{i=1}^m \alpha_i = 1$. Then,
$$
\phi(\sum_{i=1}^m \alpha_i w_i ) \leq \sum_{i=1}^m \alpha_i \phi(w_i).
$$
More generally, let $p$ be a probability measure on $\Omega$, then $\phi(\E_{w \sim p} w) \leq \E_{w \sim p} \phi(w)$. In particular, since $\norm{\cdot}$ is convex (by triangle inequality of the norm), we know that 
$$\norm{\E z} \leq \E \norm{z}.$$
\end{lemma}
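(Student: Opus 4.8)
The plan is to establish the finite-sum inequality first by induction on $m$, then upgrade to the general measure-theoretic statement via a supporting-hyperplane (subgradient) argument, and finally read off the norm corollary as an immediate specialization. Throughout I assume $\Omega$ is convex so that the relevant convex combinations and the mean $\E_{w\sim p} w$ remain in the domain.

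First I would prove the finite case. The base case $m=2$ is exactly the definition of convexity: $\phi(\alpha_1 w_1 + \alpha_2 w_2) \le \alpha_1 \phi(w_1) + \alpha_2 \phi(w_2)$ whenever $\alpha_1+\alpha_2 = 1$ and $\alpha_i \ge 0$. For the inductive step, assuming the claim for $m$ terms, I would peel off the last coefficient: if $\alpha_{m+1} = 1$ the statement is trivial, and otherwise set $s := 1 - \alpha_{m+1} = \sum_{i=1}^m \alpha_i > 0$ and $\tilde\alpha_i := \alpha_i / s$, so that $\sum_{i=1}^{m+1}\alpha_i w_i = s\big(\sum_{i=1}^m \tilde\alpha_i w_i\big) + \alpha_{m+1} w_{m+1}$ is a genuine two-point convex combination. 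Applying the $m=2$ case and then the inductive hypothesis to $\sum_{i=1}^m \tilde\alpha_i w_i$ (whose weights $\tilde\alpha_i$ sum to $1$) yields the bound for $m+1$ terms.

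Next I would prove the general form $\phi(\E_{w\sim p} w) \le \E_{w\sim p}\phi(w)$. The cleanest route is the subgradient inequality: since $\phi$ is convex, at the point $\mu := \E_{w\sim p} w$ there exists a subgradient $g$ with $\phi(w) \ge \phi(\mu) + \langle g, w - \mu\rangle$ for every $w$. Integrating both sides against $p$ and using linearity of expectation gives $\E_{w\sim p}\phi(w) \ge \phi(\mu) + \langle g, \E_{w\sim p} w - \mu\rangle = \phi(\mu)$, which is precisely the claim. For the final corollary I would note that $\|\cdot\|$ is convex (immediate from the triangle inequality together with absolute homogeneity), so applying the general form with $\phi = \|\cdot\|$ yields $\|\E z\| \le \E\|z\|$.

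The main obstacle is the technical justification of the subgradient step in the measure-theoretic version: one must ensure $\mu = \E_{w\sim p} w$ is well-defined and lies in the (relative) interior of the domain so that a supporting hyperplane exists, and that the integrals are finite so that linearity of expectation applies to the inner product term. In the finite-dimensional setting relevant here these are routine, since a finite convex function on $\mathbb{R}^p$ admits a subgradient at every point; I would simply record the mild integrability hypotheses ($\E\|w\| < \infty$, which makes $\mu$ meaningful, the finiteness of $\E\phi(w)$ then following from the subgradient lower bound) and leave the rest as elementary.
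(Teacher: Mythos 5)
Your proposal is correct. Note, however, that the paper does not actually prove this lemma at all: its ``proof'' is a one-line citation to Proposition A.9 of an external reference, so there is nothing in the paper to compare your argument against step by step. What you supply is the standard self-contained route --- induction on $m$ reduces the finite case to the two-point definition of convexity, and the supporting-hyperplane (subgradient) inequality $\phi(w)\ge\phi(\mu)+\langle g,\,w-\mu\rangle$ integrated against $p$ handles the general measure-theoretic form, with the norm corollary following since $\|\cdot\|$ is convex. Your handling of the technical caveats is appropriate for the finite-dimensional setting in which the lemma is used (parameters $\theta\in\mathbb{R}^p$): a finite convex function on $\mathbb{R}^p$ has a subgradient everywhere, and the integrability hypotheses you record ($\E\|w\|<\infty$ so that $\mu$ is defined, with $\E\phi(w)$ then bounded below by the subgradient inequality) are exactly the right ones. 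One small point worth making explicit is that $\E\phi(w)$ could a priori be $+\infty$, in which case the inequality holds trivially, so no upper integrability assumption is needed. Your argument is complete and would serve as a valid replacement for the paper's external citation.
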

\begin{proof}
    See Proposition A.9 of \citet{zhang2023mathematical} for a proof.
\end{proof}

\section{More about Rejection Sampling}
A general algorithm pipeline for rejection sampling is displayed in Algorithm \ref{alg:rs}, while in practice, we directly use the outcome-based reward signal to decide whether a prompt will be chosen or not.
\begin{algorithm}[htb]
\caption{Rejection sampling}
 \label{alg:rs}
 \begin{small}
  \begin{algorithmic}[1]
      \STATE \textbf{Input:} probability with current model $\{\mathbb{P}(y | x_i, \theta_{t-1}), \mathbb{P}(z_i | y, \theta_{t-1}) \}$, auxiliary probability $q(y | x_i, z_i)$, sample size $n_i^t$, accept rate $p_i^t$, empty set $\mathcal{D}$
   \FOR{$j = 1, \dots, n_i^t$}
   \STATE Obtain a sample point $y_j$ from the distribution $q(y|x_i, z_i)$.
   \STATE Sample a random point $u_j \sim \mathcal{U}(0, 1)$.
   \STATE Calculate $Q_i(y_j)$ as:
   \begin{equation*}
       Q_i(y_j) = \frac{\mathbb{P}(y_j | x_i, \theta_{t-1}) \mathbb{P}(z_i|y_j, \theta_{t-1})}{   \sum_{y } \mathbb{P}(y | x_i, \theta_{t-1}) \mathbb{P}(z_i|y, \theta_{t-1}) / \sum_y \sum_z  \mathbb{P}(y | x_i, \theta_{t-1}) \mathbb{P}(z|y, \theta_{t-1})}  
   \end{equation*}
   \IF{$u_j \le \frac{p_i^t Q_i(y_j) }{q(y_j|x_i, z_i)}$}
   \STATE Enlarge $\mathcal{D} = \mathcal{D} \cup \{ y_j \}$.
   \ENDIF
   \ENDFOR
   \STATE \textbf{Output:} set $\mathcal{D}$.
  \end{algorithmic}
 \end{small}
 \end{algorithm}

\clearpage
\section{Case Study}
Here we display several instances from several benchmarks we used respectively. In these cases, the model trained by vanilla RAFT++ did not generate a correct answer within a fixed budget of attempts, while models trained by \dibs-RAFT++ or \dibs-GRPO managed to find the correct answers.
\begin{tcolorbox}[title={Math500 Example}, enhanced, breakable]
    \textbf{Problem}
    The expression $2\cdot 3 \cdot 4\cdot 5+1$ is equal to 121, since multiplication is carried out before addition. However, we can obtain values other than 121 for this expression if we are allowed to change it by inserting parentheses. For example, we can obtain 144 by writing \[
    (2\cdot (3\cdot 4)) \cdot (5+1) = 144.
    \]In total, how many values can be obtained from the expression $2\cdot 3\cdot 4 \cdot 5 + 1$ by inserting parentheses? (Note that rearranging terms is not allowed, only inserting parentheses).\\
    \textbf{Answer} 4
    \tcbline
    \textbf{RAFT++ Output}
    Given the expression $2 \cdot 3 \cdot 4 \cdot 5 + 1$, we can get different results by inserting parentheses in various ways. Let's denote the expression with parentheses as $E$. The expression $2 \cdot 3 \cdot 4 \cdot 5 + 1$ without parentheses is evaluated as:
\[2 \cdot 3 \cdot 4 \cdot 5 + 1 = 120 + 1 = 121.\]

Next, let's consider the possible ways to insert parentheses:

1. $(2 \cdot 3) \cdot 4 \cdot (5 + 1)$
2. $2 \cdot (3 \cdot 4) \cdot (5 + 1)$
3. $(2 \cdot 3 \cdot 4) \cdot 5 + 1$
4. $2 \cdot (3 \cdot (4 \cdot 5) + 1)$
5. $(2 \cdot 3 \cdot 4 \cdot 5) + 1$

However, not all of the above are valid expressions because we need to evaluate the expression step by step. For simplicity, let's evaluate each possible corrected expression:

1. $(2 \cdot 3) \cdot 4 \cdot (5 + 1) = 6 \cdot 4 \cdot 6 = 144$
2. $2 \cdot (3 \cdot 4) \cdot (5 + 1) = 2 \cdot 12 \cdot 6 = 144$
3. $(2 \cdot 3 \cdot 4) \cdot 5 + 1 = 24 \cdot 5 + 1 = 120 + 1 = 121$
4. $2 \cdot (3 \cdot (4 \cdot 5) + 1) = 2 \cdot (3 \cdot 20 + 1) = 2 \cdot (60 + 1) = 2 \cdot 61 = 122$
5. $(2 \cdot 3 \cdot 4 \cdot 5) + 1 = 120 + 1 = 121$

From the above evaluations, we see that we get the values 144, 144, 121, 122, and 121, which means the distinct values are 121, 122, and 144.

So the total number of distinct values we can obtain is 3.

The answer is:
\textcolor{red}{
\[
\boxed{3}
\]}
\vspace{-1.5em}
\tcbline
\textbf{\dibs-RAFT++ Output}
Given the expression \(2 \cdot 3 \cdot 4 \cdot 5 + 1\), we need to determine how many distinct values can be obtained by inserting parentheses in different ways.

First, let's note that without parentheses, the expression evaluates to \(2 \cdot 3 \cdot 4 \cdot 5 + 1 = 120 + 1 = 121\).

When we insert parentheses, the expression can take different forms based on where the parentheses are placed. Let's evaluate all possible ways of inserting parentheses:

1. \((2 \cdot 3 \cdot 4) \cdot (5 + 1) = (24) \cdot 6 = 144\)
2. \(2 \cdot (3 \cdot 4 \cdot 5) + 1 = 2 \cdot 60 + 1 = 120 + 1 = 121\)
3. \(2 \cdot 3 \cdot (4 \cdot 5 + 1) = 2 \cdot 3 \cdot 21 = 126\)
4. \((2 \cdot 3) \cdot (4 \cdot 5 + 1) = (6) \cdot 21 = 126\)
5. \(2 \cdot (3 \cdot 4 \cdot 5 + 1) = 2 \cdot 61 = 122\)

By evaluating these different forms, we see that the possible values are 121, 122, 126, and 144. 

So the distinct values that can be obtained are 121, 122, 126, and 144.

Therefore, the total number of distinct values is \(4\).

The final answer is \textcolor{ForestGreen}{\(\boxed{4}\)}.
\end{tcolorbox}

\vspace{2em}
\begin{tcolorbox}[title={Minerva Math Example}, enhanced, breakable]
        \textbf{Problem} Preamble: A spectrum is taken of a single star (i.e., one not in a binary). Among the observed spectral lines is one from oxygen whose rest wavelength is $5007 \AA$. The Doppler shifted oxygen line from this star is observed to be at a wavelength of $5012 \AA$. The star is also observed to have a proper motion, $\mu$, of 1 arc second per year (which corresponds to $\sim 1.5 \times 10^{-13}$ radians per second of time). It is located at a distance of $60 \mathrm{pc}$ from the Earth. Take the speed of light to be $3 \times 10^8$ meters per second.

What is the component of the star's velocity parallel to its vector to the Earth (in kilometers per second)?\\
    \textbf{Answer} 
    To find this longitudinal velocity component, we use the Doppler shift, finding $V_{r}=\frac{\Delta \lambda}{\lambda} c=\frac{5}{5000} c=\boxed{300} \mathrm{~km} / \mathrm{s}$.
    \tcbline
    \textbf{RAFT++ Answer}
    To solve this problem, we need to follow these steps:

1. Calculate the radial velocity of the star using the Doppler effect.
2. Use the radial velocity and the proper motion to find the component of the star's velocity parallel to its vector to the Earth.

Step 1: Calculate the radial velocity
The radial velocity, \( v \), of the star can be found using the formula for the Doppler shift of a photon:
\[ v = c \left( \frac{\lambda_{\text{obs}} - \lambda_{\text{rest}}}{\lambda_{\text{rest}}} \right) \]
where \( \lambda_{\text{obs}} \) is the observed wavelength and \( \lambda_{\text{rest}} \) is the rest wavelength.

Given:
- \( \lambda_{\text{rest}} = 5007 \, \text{\AA} \)
- \( \lambda_{\text{obs}} = 5012 \, \text{\AA} \)
- \( c = 3 \times 10^8 \, \text{m/s} \)

We first need to convert the wavelengths from \(\AA\) to meters:
\[ 1 \, \text{\AA} = 10^{-10} \, \text{m} \]

So:
\[ \lambda_{\text{rest}} = 5007 \times 10^{-10} \, \text{m} \]
\[ \lambda_{\text{obs}} = 5012 \times 10^{-10} \, \text{m} \]

Now we can calculate \( v \):
\[ v = 3 \times 10^8 \left( \frac{5012 \times 10^{-10} - 5007 \times 10^{-10}}{5007 \times 10^{-10}} \right) \]
\[ v = 3 \times 10^8 \left( \frac{5 \times 10^{-10}}{5007 \times 10^{-10}} \right) \]
\[ v = 3 \times 10^8 \left( \frac{5}{5007} \right) \]
\[ v = \frac{15 \times 10^8}{5007} \approx 29956 \, \text{m/s} \]
\[ v \approx 29.96 \, \text{km/s} \]

Step 2: Calculate the parallel velocity component
The proper motion \(\mu\) is given as \(1.5 \times 10^{-13} \, \text{rad/s}\), and the distance to the star is \(60 \, \text{pc}\). We convert \(1 \, \text{pc} = 3.086 \times 10^{16} \, \text{m}\):
\[ \text{distance} = 60 \times 3.086 \times 10^{16} \, \text{m} = 1.8516 \times 10^{18} \, \text{m} \]

The radial velocity \(v\) is the component of the star's velocity parallel to its vector to the Earth plus the effect of the proper motion:
\[ v = v_{\parallel} + \mu \times d \]
where \( v_{\parallel} \) is the component of the star's velocity parallel to its vector to the Earth, \( \mu \) is the proper motion, and \( d \) is the distance.

Since we calculated \(v \approx 29.96 \, \text{km/s} = 29960 \, \text{m/s}\), we can solve for \( v_{\parallel} \):
\[ v_{\parallel} = v - \mu \times d \]
\[ v_{\parallel} = 29960 - (1.5 \times 10^{-13} \times 1.8516 \times 10^{18}) \]
\[ v_{\parallel} = 29960 - 2777.4 \]
\[ v_{\parallel} \approx 27182.6 \, \text{m/s} \]
\[ v_{\parallel} \approx 27.18 \, \text{km/s} \]

So the component of the star's velocity parallel to its vector to the Earth is approximately \( 27.18 \, \text{km/s} \).

The final answer is:
\textcolor{red}{
\[ \boxed{27.18} \]
}
\vspace{-1.5em}
    \tcbline
    \textbf{\dibs-GRPO Answer}
    To solve this problem, we need to determine the component of the star's velocity parallel to its vector to the Earth. We can break this into two parts:

1. **Rest-wavelength shift using the Doppler effect:**
   The Doppler effect tells us that the observed wavelength \(\lambda_{\text{obs}}\) of a spectral line with rest wavelength \(\lambda_{\text{rest}}\) and velocity \(v\) relative to the observer is given by:
   \[
   \lambda_{\text{obs}} = \lambda_{\text{rest}} \left(1 + \frac{v}{c}\right)
   \]
   Here, \(\lambda_{\text{obs}} = 5012 \text{ Å}\), \(\lambda_{\text{rest}} = 5007 \text{ Å}\), and \(c = 3 \times 10^8 \text{ m/s}\). We can rearrange the equation to solve for \(v\):
   \[
   \frac{\lambda_{\text{obs}}}{\lambda_{\text{rest}}} = 1 + \frac{v}{c}
   \]
   \[
   \frac{5012}{5007} = 1 + \frac{v}{c}
   \]
   \[
   \frac{5012}{5007} - 1 = \frac{v}{c}
   \]
   \[
   \frac{5012 - 5007}{5007} = \frac{v}{c}
   \]
   \[
   \frac{5}{5007} = \frac{v}{c}
   \]
   \[
   v = \frac{5}{5007} \times 3 \times 10^8 \text{ m/s}
   \]
   Converting the velocity from m/s to km/s:
   \[
   v = \frac{5}{5007} \times 3 \times 10^8 \times \frac{1}{1000} \text{ km/s} = \frac{5 \times 3 \times 10^5}{5007} \text{ km/s} \approx 299.5 \text{ km/s}
   \]

2. **Proper motion effect:**
   The proper motion \(\mu\) of the star is given as 1 arc second per year. This can be converted to an angular velocity \(\omega\) using the formula:
   \[
   \omega = \frac{1 \text{ arc second}}{1 \text{ year}} \times \frac{1 \text{ year}}{3.154 \times 10^7 \text{ seconds}} = \frac{1}{3.154 \times 10^7} \text{ arc seconds per second}
   \]
   The proper motion \(\mu\) is in arc seconds per year, so we need to convert it to radians per second:
   \begin{align*}
   \mu =& 1 \text{ arc second/year} \times \frac{1 \text{ radian}}{206265 \text{ arc seconds}} \times \frac{1 \text{ year}}{3.154 \times 10^7 \text{ seconds}}\\ =& \frac{1}{206265 \times 3.154 \times 10^7} \text{ radians/second} \approx 1.5 \times 10^{-13} \text{ radians/second}
   \end{align*}
   The component of the star's velocity parallel to its vector to the Earth (\(v_{\parallel}\)) is given by:
   \[
   v_{\parallel} = \mu \times d
   \]
   where \(d\) is the distance to the star in meters (\(60 \text{ pc} = 60 \times 3.086 \times 10^{16} \text{ m} = 1.8516 \times 10^{18} \text{ m}\)):
   \[
   v_{\parallel} = 1.5 \times 10^{-13} \times 1.8516 \times 10^{18} \text{ m/s} = 277.74 \text{ km/s}
   \]

Since the rest-wavelength shift calculation gives us a velocity of approximately 299.5 km/s, and the proper motion effect gives us a component of velocity of approximately 277.74 km/s, we can conclude that the component of the star's velocity parallel to its vector to the Earth is dominated by the rest-wavelength shift effect. Therefore, the final answer is:
\textcolor{ForestGreen}{
\[
\boxed{300 \text{ km/s}}
\]
}
\end{tcolorbox}
\vspace{2em}

\begin{tcolorbox}[enhanced, breakable, title={Olympiad Bench Example}]
    \textbf{Problem}
    A cubic sequence is a sequence of integers given by $a_{n}=n^{3}+b n^{2}+c n+d$, where $b, c$ and $d$ are integer constants and $n$ ranges over all integers, including negative integers.

Determine the possible values of $a_{2015} \cdot a_{2016}$ for a cubic sequence satisfying the condition in part (a).\\
    \textbf{Answer} 0
    \tcbline
    \textbf{RAFT++ Output}
    First, let's understand the cubic sequence $a_n = n^3 + b n^2 + c n + d$. We need to find the possible values of $a_{2015} \cdot a_{2016}$.

Given that $a_n$ is a cubic sequence, it can be expressed in the form $a_n = n^3 + b n^2 + c n + d$. Therefore,

\[a_{2015} = 2015^3 + b \cdot 2015^2 + c \cdot 2015 + d\]
\[a_{2016} = 2016^3 + b \cdot 2016^2 + c \cdot 2016 + d\]

The product of these two terms can be expressed as:

\[a_{2015} \cdot a_{2016} = (2015^3 + b \cdot 2015^2 + c \cdot 2015 + d) \cdot (2016^3 + b \cdot 2016^2 + c \cdot 2016 + d)\]

Since the sequence terms are given by a polynomial, and the polynomial's coefficients can take any integer value, the product $a_{2015} \cdot a_{2016}$ could theoretically take any integer value depending on the integers $b$, $c$, and $d$. Hence, the possible values of $a_{2015} \cdot a_{2016}$ span all integers $\mathbb{Z}$.

Thus, the set of possible values of $a_{2015} \cdot a_{2016}$ is all integers $\mathbb{Z}$. The final answer is:
\textcolor{red}{
\[
\boxed{\mathbb{Z}}
\]}
\vspace{-1.5em}
\tcbline
\textbf{\dibs-GRPO Output}
The general form of a cubic sequence is given by \( a_n = n^3 + b n^2 + c n + d \). We need to find the possible values of \( a_{2015} \cdot a_{2016} \).

First, let's write down the expressions for \( a_{2015} \) and \( a_{2016} \):

\[ a_{2015} = 2015^3 + b \cdot 2015^2 + c \cdot 2015 + d \]
\[ a_{2016} = 2016^3 + b \cdot 2016^2 + c \cdot 2016 + d \]

To find \( a_{2015} \cdot a_{2016} \), let's look at the difference between consecutive terms of the sequence:

\[ a_{n+1} - a_n = [(n+1)^3 + b (n+1)^2 + c (n+1) + d] - [n^3 + b n^2 + c n + d] \]
\[ = [(n+1)^3 - n^3] + [b (n+1)^2 - b n^2] + [c (n+1) - c n] \]
\[ = [(n+1-n)((n+1)^2 + (n+1)n + n^2)] + [b ((n+1)^2 - n^2)] + [c (n+1 - n)] \]
\[ = [1 \cdot (n^2 + 2n + 1 + n^2 + n + n^2)] + [b (n^2 + 2n + 1 - n^2)] + [c] \]
\[ = 3n^2 + 3n + 1 + b (2n + 1) + c \]
\[ = 3n^2 + 3n + 1 + 2bn + b + c \]
\[ = 3n^2 + (3 + 2b)n + (1 + b + c) \]

Since \( b, c \) are constants, the difference \( a_{n+1} - a_n \) is a quadratic function of \( n \). This means \( a_n \) itself must be a cubic function, which confirms the form \( a_n = n^3 + b n^2 + c n + d \).

Now, let's look at the product \( a_{2015} \cdot a_{2016} \):

\[ a_{2015} = 2015^3 + b \cdot 2015^2 + c \cdot 2015 + d \]
\[ a_{2016} = 2016^3 + b \cdot 2016^2 + c \cdot 2016 + d \]

The product \( a_{2015} \cdot a_{2016} \) can be simplified by noting that:

\[ a_{2015} \cdot a_{2016} = (2015^3 + b \cdot 2015^2 + c \cdot 2015 + d) \cdot (2016^3 + b \cdot 2016^2 + c \cdot 2016 + d) \]

This can be simplified further by noting that:

\[ a_{2016} = a_{2015} + (2016^3 - 2015^3) + b (2016^2 - 2015^2) + c (2016 - 2015) \]
\[ = a_{2015} + (2016 - 2015)(2016^2 + 2016 \cdot 2015 + 2015^2 + 2016 + 2015) + b (2016 - 2015)(2016 + 2015) + c \]
\[ = a_{2015} + (2016^2 + 2016 \cdot 2015 + 2015^2 + 2016 + 2015) + b (2016 + 2015) + c \]

Since the terms are cubic, the exact values of \( a_{2015} \) and \( a_{2016} \) depend on the specific values of \( b, c, \) and \( d \), but the general form shows that \( a_{2015} \) and \( a_{2016} \) are very close together. Specifically, since \( 2016^3 \) is very large compared to \( 2015^3 \), the terms involving \( b, c, \) and \( d \) may cancel out in the product.

Thus, the product \( a_{2015} \cdot a_{2016} \) is essentially the square of \( a_{2015} \) plus some constant terms. This suggests that the only possible values for \( a_{2015} \cdot a_{2016} \) are those that can be expressed as the square of an integer. Therefore, the possible values of \( a_{2015} \cdot a_{2016} \) are the squares of the product of integers.

The final answer is:
\textcolor{ForestGreen}{
\[ \boxed{0} \]
}

This is because the only integers \( a_{2015} \) and \( a_{2016} \) can satisfy are \( 0 \), given the cubic nature of the sequence and the sequence behavior around \( n = 2015 \) and \( n = 2016 \).
\end{tcolorbox}

\vspace{2em}
\begin{tcolorbox}[title={AIME24 Example}, enhanced, breakable]
    \textbf{Problem}
    A list of positive integers has the following properties:\\
$\bullet$ The sum of the items in the list is $30$.\\
$\bullet$ The unique mode of the list is $9$.\\
$\bullet$ The median of the list is a positive integer that does not appear in the list itself.\\
Find the sum of the squares of all the items in the list.\\
\textbf{Answer} 236
\tcbline
\textbf{RAFT++ Output}
Given the conditions of the problem, we can determine the structure of the list step by step. Let's denote the list of positive integers as \(a_1, a_2, \ldots, a_n\).

1. The sum of the items in the list is \(30\).
2. The unique mode of the list is \(9\). Since \(9\) is the unique mode, it must be the most frequently occurring number in the list.
3. The median of the list is a positive integer that does not appear in the list itself.

Let's start by considering the minimum possible values for \(n\). Given that the mode is \(9\) and it must appear more frequently than any other number, let's consider small values for \(n\). 

If \(n = 5\), the list would have to include \(9\) three or more times to be the mode. But then the median would be \(9\), which contradicts the condition that the median does not appear in the list. So \(n \geq 6\).

If \(n = 6\), the list could potentially have \(9\) twice and the other numbers distributed such that the median is the third number in the sorted list and does not appear in the list. If the median is \(6\), the list \( [6, 9, 9, a, b, c] \) would satisfy the conditions, where \(a, b, c \leq 6\) and \(a, b, c \neq 9\). The sum of the numbers should be 30:
\[6 + 9 + 9 + a + b + c = 30 \implies a + b + c = 6\]
Since \(a, b, c \leq 6\) and they are distinct integers, the only possible values for \(a, b, c\) are \(1, 2, 3\). Thus, the list is \( [6, 9, 9, 1, 2, 3] \) or any permutation that maintains the same properties. The sum of the squares of the numbers is:
\[6^2 + 9^2 + 9^2 + 1^2 + 2^2 + 3^2 = 36 + 81 + 81 + 1 + 4 + 9 = 212\]
We can now verify that this is correct by checking the conditions. The sum is \(30\), the mode is \(9\), and the median is \(6\), which does not appear in the list. 

The final answer is:
\textcolor{Red}{\[\boxed{212}\]}
\vspace{-1.5em}
\tcbline
\textbf{\dibs-GRPO Output}
To solve this problem, we need to carefully consider the properties of the list of positive integers:

1. The sum of the items in the list is $30$.
2. The unique mode of the list is $9$. This means that $9$ appears more frequently than any other number in the list.
3. The median of the list is a positive integer that does not appear in the list itself.

Let's denote the list of integers as $[a_1, a_2, \ldots, a_n]$ where $a_1 \leq a_2 \leq \ldots \leq a_n$. Since $9$ is the unique mode, it must appear at least twice, and more frequently than any other number. The median is the middle value when the list is sorted. Since the median is not in the list, $n$ must be odd. Let's assume $n = 2k + 1$. The median is then $a_{k+1}$.

To find the sum of the squares of all the items in the list, we need to determine the possible distribution of the numbers. Since $9$ is the unique mode, it must appear at least twice, and more frequently than any other number. The list must include other numbers, but $9$ must be the most frequent number. The sum of the numbers is $30$, and the median must be a positive integer that does not appear in the list.

Let's start by assuming the simplest case where $9$ appears exactly twice. This means $a_1 = a_2 = 9$. The list then looks like $[9, 9, a_3, a_4, \ldots, a_{2k+1}]$. The sum of these numbers is $30$, so $9 + 9 + a_3 + a_4 + \ldots + a_{2k+1} = 30$. This simplifies to $18 + a_3 + a_4 + \ldots + a_{2k+1} = 30$, or $a_3 + a_4 + \ldots + a_{2k+1} = 12$.

The median $a_{k+1}$ must be a positive integer that does not appear in the list. If $k = 1$, then $n = 3$, and the list is $[9, 9, a_3]$. The median is $a_2 = 9$, which is not a valid solution since $9$ must appear more frequently than any other number. If $k = 2$, then $n = 5$, and the list is $[9, 9, a_3, a_4, a_5]$. The median is $a_3$. Since $9$ appears twice, $a_3$ must be different from $9$. The sum of the other three numbers is $12$. The possible combinations are $(1, 1, 10)$, $(1, 2, 9)$, $(1, 3, 8)$, $(1, 4, 7)$, $(1, 5, 6)$, $(2, 2, 8)$, $(2, 3, 7)$, $(2, 4, 6)$, $(3, 3, 6)$, $(3, 4, 5)$. However, since $9$ must be the mode, $a_3$ cannot be $9$. Also, $a_3$ must be a positive integer that does not appear in the list. The only valid combination is $(1, 2, 9)$, but $9$ is not a valid median. The next valid combination is $(1, 3, 8)$, where the median $a_3 = 3$. This means the list is $[9, 9, 3, a_4, a_5]$, and the remaining sum is $12 - 3 = 9$. The only way to split $9$ into two positive integers that are not $9$ or $3$ is $1$ and $8$. So the list is $[9, 9, 3, 1, 8]$. The sum of the squares of these numbers is $9^2 + 9^2 + 3^2 + 1^2 + 8^2 = 81 + 81 + 9 + 1 + 64 = 236$.

Therefore, the sum of the squares of all the items in the list is \textcolor{ForestGreen}{$\boxed{236}$}.
\end{tcolorbox}

\vspace{2em}
\begin{tcolorbox}[enhanced, breakable, title={AMC23 Example}]
\textbf{Problem}
    Rows 1, 2, 3, 4, and 5 of a triangular array of integers are shown below.\\
1\\
1 1\\
1 3 1\\
1 5 5 1\\
1 7 11 7 1\\
Each row after the first row is formed by placing a 1 at each end of the row, and each interior entry is 1 greater than the sum of the two numbers diagonally above it in the previous row. What is the units digits of the sum of the 2023 numbers in the 2023rd row?\\
\textbf{Answer} 5
\tcbline
\textbf{RAFT++ Output}
We need to determine the units digit of the sum of the 2023rd row in the triangular array.

Observing the Behavior of Units Digits
1. **First Row:** 1 (units digit is 1)
2. **Second Row:** 1 1 (units digit is \(1+1=2\))
3. **Third Row:** 1 3 1 (units digit is \(3\))
4. **Fourth Row:** 1 5 5 1 (units digit is \(1+5=6\))
5. **Fifth Row:** 1 7 11 7 1 (units digit is \(7+11=18\), units digit is \(8\))
6. **Sixth Row:** 1 (7+19) 1 (units digit is \(9+9=18\), units digit is \(6\))

If we continue observing this pattern, we notice that the units digits of the sums of the diagonals in each row follow a specific pattern which repeats every 2 rows:
- First row: 1
- Second row: 2
- Third row: 1
- Fourth row: 6
- Fifth row: 8
- Sixth row: 6

We notice that the units digits of the sums for rows \(3, 5, 7, 9, \ldots\) are 1, 8, 1, 8, ... and for rows \(2, 4, 6, 8, \ldots\) are 2, 6, 6, 6, ...

Since the 2023rd row is an odd row, its units digit of the sum will be the same as that of the 3rd, 5th, etc. rows, which is 1. Therefore, the units digit of the sum of the 2023rd row is \textcolor{red}{\( \boxed{1} \)}.
\tcbline
\textbf{\dibs-RAFT++ Output}
To solve this problem, we need to analyze the triangular array and find a pattern. Let's start by examining the structure of the array.

1. The first row is: \(1\)
2. The second row is: \(1 \quad 1\)
3. The third row is: \(1 \quad 3 \quad 1\)
4. The fourth row is: \(1 \quad 5 \quad 5 \quad 1\)
5. The fifth row is: \(1 \quad 7 \quad 11 \quad 7 \quad 1\)

The pattern for each interior entry is that it is 1 greater than the sum of the two numbers diagonally above it in the previous row. To find the units digit of the sum of the numbers in the 2023rd row, we can focus on the units digits of the numbers in each row.

Notice that each row starts and ends with 1. The units digits of the interior numbers follow a specific pattern. Let's calculate the units digits of the first few rows to identify a pattern:

- Row 1: \(1\) (units digit: 1)
- Row 2: \(1 \quad 1\) (units digits: 1, 1)
- Row 3: \(1 \quad 3 \quad 1\) (units digits: 1, 3, 1)
- Row 4: \(1 \quad 5 \quad 5 \quad 1\) (units digits: 1, 5, 5, 1)
- Row 5: \(1 \quad 7 \quad 11 \quad 7 \quad 1\) (units digits: 1, 7, 1, 7, 1)

We can see that the units digits repeat a pattern. Let's look at the units digits of the interior numbers more closely:

- Row 1: \(1\) (units digit: 1)
- Row 2: \(1 \quad 1\) (units digits: 1, 1)
- Row 3: \(1 \quad 3 \quad 1\) (units digits: 1, 3, 1)
- Row 4: \(1 \quad 5 \quad 5 \quad 1\) (units digits: 1, 5, 5, 1)
- Row 5: \(1 \quad 7 \quad 11 \quad 7 \quad 1\) (units digits: 1, 7, 1, 7, 1)

It seems that the units digits of the interior numbers in each row follow a pattern where the units digit of each interior number is determined by the sum of the two units digits diagonally above it, incremented by 1. For example, in row 5, the units digit of the middle number 11 is 1 because 5 + 5 + 1 = 11, and the units digit of 1 is 1.

Given this pattern, the units digits of the sums of the numbers in each row will also follow a pattern. Specifically, the units digits of the sums of the numbers in the rows will be 2, 4, 6, 8, 0, and then repeat (since the sum of the units digits in each row is always 2 more than the sum of the previous row's units digits).

The units digits of the sums of the numbers in the rows are as follows:
- Row 1: \(1\) (units digit of sum is 1)
- Row 2: \(1 + 1 = 2\) (units digit of sum is 2)
- Row 3: \(1 + 3 + 1 = 5\) (units digit of sum is 5)
- Row 4: \(1 + 5 + 5 + 1 = 12\) (units digit of sum is 2)
- Row 5: \(1 + 7 + 11 + 7 + 1 = 27\) (units digit of sum is 7)

We can see that the units digits of the sums repeat every 4 rows: 1, 2, 5, 2. Therefore, the units digit of the sum of the numbers in the 2023rd row will be the same as the units digit of the sum of the numbers in the 3rd row (since \(2023 \mod 4 = 3\)).

So, the units digit of the sum of the numbers in the 2023rd row is 5.

Let's write the final answer:

\textcolor{ForestGreen}{\[\boxed{5}\]}
\end{tcolorbox}


\end{document}